\theoremstyle{plain}
\newtheorem{theorem}{Theorem}
\newcommand{\cL}{\mathcal{L}}         
\newcommand{\student}{\mathsf{stu}}
\newcommand{\teacher}{\mathsf{tea}}
\newcommand{\TV}{\mathrm{TV}}
\newcommand{\PreserveBackslash}[1]{\let\temp=\\#1\let\\=\temp}
\newcolumntype{C}[1]{>{\PreserveBackslash\centering}p{#1}}
\newcolumntype{R}[1]{>{\PreserveBackslash\raggedleft}p{#1}}
\newcolumntype{L}[1]{>{\PreserveBackslash\raggedright}p{#1}}
\definecolor{mygray}{gray}{.92}
\definecolor{ForestGreen}{RGB}{34,139,34}
\newcommand{\fg}[1]{\mathbf{\mathcolor{ForestGreen}{#1}}}
\definecolor{Forestred}{RGB}{220,50,50}
\definecolor{optimum}{RGB}{70,52,128}
\definecolor{darkpink}{RGB}{255, 20, 147}
\newcommand{\algname}{EPIC\xspace}
\title{Efficient Multi-modal Large Language Models via Progressive Consistency Distillation}
\author{
    {\bf Zichen Wen}$^{1, 2}$ \quad 
    {\bf Shaobo Wang}$^{1}$ \quad
    {\bf Yufa Zhou}$^{3}$ \quad
    {\bf Junyuan Zhang}$^{4}$ \quad
    {\bf Qintong Zhang}$^{5}$ \\
    {\bf Yifeng Gao}$^1$ \quad 
    {\bf Zhaorun Chen}$^6$ \quad
    {\bf Bin Wang}$^2$ \quad
    {\bf Weijia Li}$^{7, 2}$ \quad \\
    {\bf Conghui He}$^{2}$\footnotemark[1] \quad
    {\bf Linfeng Zhang}$^{1}$\footnotemark[1] \\
    \textsuperscript{1}EPIC Lab, Shanghai Jiao Tong University 
    \quad
    \textsuperscript{2}Shanghai AI Laboratory  \\
    \textsuperscript{3}Duke University     
    \quad
    \textsuperscript{4}The University of Hong Kong \\
    \textsuperscript{5}Peking University      
    \quad
    \textsuperscript{6}University of Chicago
    \quad
    \textsuperscript{7}Sun Yat-sen University  \\
    \normalsize
    \texttt{heconghui@pjlab.org.cn, zhanglinfeng@sjtu.edu.cn}
    \vspace{4pt} \\
    \faGithub \, \textbf{Code:} \href{https://github.com/ZichenWen1/EPIC}{\textcolor{darkpink}{https://github.com/ZichenWen1/EPIC}}
}
\begin{document}

\maketitle
{
\renewcommand{\thefootnote}{\fnsymbol{footnote}}
\footnotetext[1]{Corresponding authors.}
}


\begin{abstract}
Visual tokens consume substantial computational resources in multi-modal large models (MLLMs), significantly compromising their efficiency.
Recent works have attempted to improve efficiency by compressing visual tokens during training, either through modifications to model components or by introducing additional parameters. However, they often overlook the increased learning difficulty caused by such compression, as the model’s parameter space struggles to quickly adapt to the substantial perturbations in the feature space induced by token compression.
In this work, we propose to develop \textbf{E}fficient MLLMs via \textbf{P}rogress\textbf{I}ve \textbf{C}onsistency Distillation (\texttt{\algname}), a progressive learning framework. Specifically, by decomposing the feature space perturbations introduced by token compression along the token-wise and layer-wise dimensions, we introduce token consistency distillation and layer consistency distillation, respectively, aiming to reduce the training difficulty by leveraging guidance from a teacher model and following a 
progressive learning trajectory.
Extensive experiments demonstrate the superior effectiveness, robustness, and generalization capabilities of our proposed framework.


\end{abstract}

\section{Introduction}
\label{sec:intro}
Multi-modal large language models (MLLMs)~\citep{li2023blip, zhu2023minigpt,liu2024improved,liu2024visual} equip large language models (LLMs)~\citep{touvron2023llama, grattafiori2024llama} with the ability to understand visual information, exhibiting remarkable capabilities across a diverse range of multi-modal tasks, including image captioning, visual question answering (VQA), video understanding~\citep{wang2024internvideo2}, and multi-modal reasoning~\citep{wang2024exploring}. 

However, unlike LLMs~\citep{touvron2023llama, bai2023qwen, yang2024qwen2,wang2025data}, which only need to process a small number of information-dense text tokens~\citep{marr2010vision}, the introduction of substantial visual tokens in MLLMs~\citep{Bai:Qwen-VL, bai2025qwen2} presents significant computational challenges. 
This issue becomes particularly pronounced when handling high-resolution images~\citep{li2024mini} or multi-frame videos~\citep{tang2023video}. 

To address this issue, a natural approach is to reduce visual tokens~\citep{liu2025shifting, xiong2025prune2drive, chen2025variation, yang2025efficientvla}. Recent advances have introduced various token compression techniques to eliminate vision tokens in a training-free approach~\citep{zhang2024sparsevlm,wen2025token}.
Most of them adopt either a token importance-based strategy~\citep{chen2024image, he2024zipvl} 
or a token redundancy-based strategy~\citep{wang2025folder,tan2025tokencarve}. 
Although the aforementioned non-parametric methods avoid additional training costs, they inevitably incur significant performance degradation.
To achieve a better performance-efficiency trade-off, recent training-aware token compression methods have attracted significant attention~\citep{cha2024honeybee, chu2024mobilevlm}.
Beyond early approaches~\cite{li2023blip} that employed Q-former, MQT-LLaVA~\citep{hu2024matryoshka} proposes a more flexible Q-former capable of dynamically encoding visual information into variable-length visual tokens.
\citep{li2024tokenpacker} proposes TokenPacker, a coarse-to-fine visual projector that progressively refines coarse tokens with visual information.
In addition to refining the model architecture and upgrading model components,
VoCo-LLaMA~\citep{ye2024voco} and LLaVA-Mini~\citep{zhang2025llava} build upon observations of the transfer of visual information within language models, attempting to transfer visual information into a small set of VoCo tokens via attention modification, or into textual tokens with a transformer-based pre-fusion module.
\begin{figure*}[!t]
    \centering
    \includegraphics[width=1.0\linewidth]{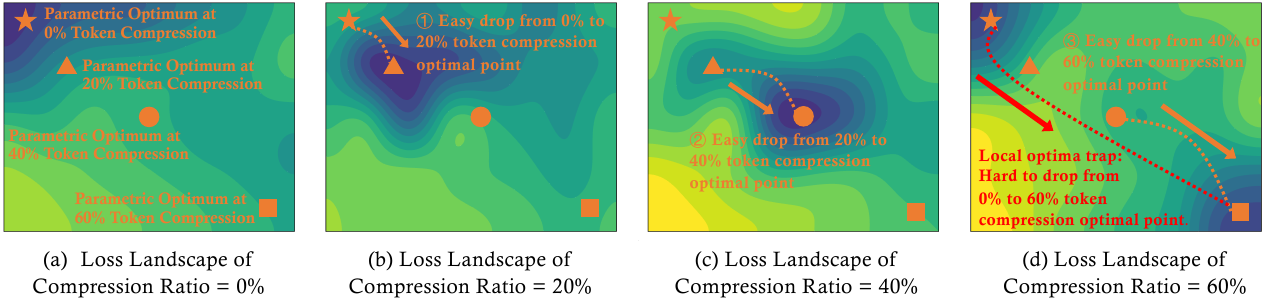}
    \vspace{-5mm}
    \caption{Progressive Consistency Distillation vs. Direct Training. Each subplot shows the loss landscape under the corresponding token compression ratio, with the \textbf{\textcolor{optimum}{optimum}} indicated. Our method reaches the objective via \textbf{\textcolor{orange}{progressive}} learning trajectories, while \textbf{\textcolor{red}{direct}} training remains challenging.}
    \label{fig:motivation}
    \vspace{-7mm}
\end{figure*}

Although these methods improve model efficiency while largely preserving performance, their improvements primarily come from architectural enhancements or newly introduced modules, often overlooking the learning challenges posed by token compression during training.
As illustrated in Figure~\ref{fig:motivation}, when token compression is applied during training, the distribution of the compressed token sequence inevitably differs from that of the full token set. This discrepancy can be regarded as a perturbation in the feature space, which shifts the model's optimal point in the parameter space. The higher the compression ratio, the greater the perturbation introduced, and consequently, the further the optimal point drifts. The goal of training-aware token compression is to guide the model to progressively adapt from the original optimum (under full tokens) to a new optimum corresponding to the compressed token distribution.
Figure~\ref{fig:motivation} (d) shows that directly training a model with compressed tokens often leads to suboptimal solutions, as the optimization process can easily get trapped in poor local minima, making it difficult to reach the desired optimum under heavy compression.


\begin{wrapfigure}{r}{0.42\textwidth} 
\vspace{-4mm}
    \centering
    \includegraphics[width=\linewidth]{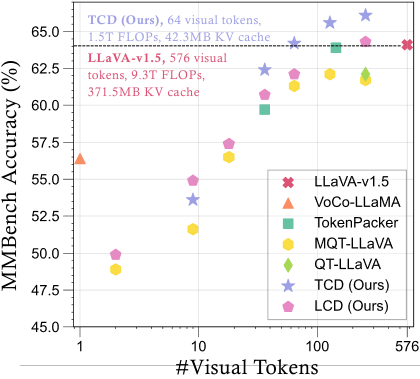}
    \vspace{-4mm}
    \captionsetup{width=0.42\textwidth}
    \caption{MMBench accuracy vs. number of visual tokens for various methods. TCD (Ours) and LCD (Ours) achieve competitive accuracy with far fewer tokens, lower FLOPs, and smaller KV cache compared to LLaVA-v1.5, highlighting its efficiency.}
    \label{fig:ill}
    \vspace{-5mm}
\end{wrapfigure}

In this work, 
we propose a progressive consistency 
distillation learning framework \texttt{\algname} tailored for token compression, where a single MLLM simultaneously acts as both teacher and student through weight sharing.
From a token-wise perspective, we introduce \textbf{Token Consistency Distillation (TCD)}. At the early stages of training, both the teacher and student models adopt a very low token compression ratio, indicating a relatively easy learning task without significant optimal point drifts (as shown in Fig.~\ref{fig:motivation} (b)). As training progresses, the compression becomes increasingly aggressive, forming a progressive learning trajectory (see Fig.~\ref{fig:motivation} (b) to (d)). Although the final optimal point under heavy compression remains far from the initial optimum, each intermediate optimum along the trajectory is relatively close to its predecessor, making each transition more manageable and easier to optimize.
Moreover, the teacher consistently uses a slightly lower compression ratio than the student, introducing a compression ratio gap between them. We argue that when the gap is too large, the student may struggle to benefit effectively from the teacher’s guidance~\citep{fani2011implications}. Therefore, the compression ratio gap is also designed to follow a progressive learning strategy, gradually increasing over time to ease the learning process.
From a layer-wise perspective, we introduce \textbf{Layer Consistency Distillation (LCD)}.
Based on observations from prior work~\citep{chen2024image}, the significance of visual tokens diminishes notably in deeper layers, suggesting that compressing tokens at these layers has minimal impact on the model's feature space and output. 
Therefore, in LCD, the token compression is progressively shifted from deeper to shallower layers as training progresses, implicitly following an easy-to-hard learning paradigm. Meanwhile, a compression gap between the teacher and the student is maintained to encourage effective guidance from the teacher.

Our primary contribution lies in proposing \texttt{\algname}, a progressive consistency distillation learning framework, which demonstrates compatibility with diverse token compression techniques. Within this framework, we introduce Token Consistency Distillation and Layer Consistency Distillation from token-wise and layer-wise dimensions, respectively. This approach enables the training of robust and highly generalizable models through a progressive learning strategy without requiring modifications to the model architecture. Compared to prior approaches, comprehensive experiments validate the superior effectiveness, robustness, and generalization capabilities of our proposed framework.


\section{Related Works}
\vspace{-2mm}
\paragraph{Multi-modal Large Language Models.}
Multi-modal Large Language Models (MLLMs), pioneered by~\citep{liu2023llava, zhu2023minigpt, instruct-blip, yin2023survey,chen2024mj} have successfully showcased promising results on a wide variety of vision-language perception and reasoning tasks. 
Existing MLLMs typically employ a pre-trained vision encoder (\emph{e.g.}, CLIP~\citep{radford2021learning} and SigLIP~\citep{zhai2023sigmoid, tschannen2025siglip}) to extract visual features, which are then projected into the LLM's input space via a visual projector (\emph{e.g.}, MLP and Q-former~\citep{li2023blip}), enabling the model to process both visual embeddings and user instructions for multimodal understanding and response generation.
Recent studies~\citep{zhang2023visual, zhang2024exploring} have highlighted the limitations of MLLMs in fine-grained visual perception tasks. 
To this end, more advanced MLLMs have attempted to increase the number of encoded visual tokens by employing dynamic resolutions~\citep{chen2024far, chen2024expanding, liu2024llavanext, li2024llava} or arbitrary resolutions~\citep{wang2024qwen2, bai2025qwen2} to process high-resolution images, thereby enhancing their performance in visual understanding tasks. However, due to the quadratic complexity of the attention mechanism~\citep{vaswani2017attention}, the resulting longer token sequences pose significant challenges to both inference speed and memory usage.

\vspace{-2mm}
\paragraph{Visual Token Compression.}
Visual tokens typically outnumber text tokens by orders of magnitude while containing greater spatial redundancy than information-dense text~\citep{marr2010vision}. Recent work has explored both training-free~\citep{he2024zipvl,zhang2024treat,zhao2024accelerating,liu2025compression,zhang2024beyond} and training-aware~\citep{cha2024honeybee,chu2023mobilevlm,Bai:Qwen-VL,chensafewatch,wang2025winningpruninggambleunified} compression approaches, with the latter showing superior performance potential despite requiring additional training~\citep{wen2025token}. Training-free methods generally follow two paradigms: importance-based strategies like FastV~\citep{chen2024image} and SparseVLM~\citep{zhang2024sparsevlm} that use attention scores, and redundancy-based approaches such as DART~\citep{wen2025stop} and G-Prune~\citep{jiang2025kind} that assess token similarity.
Early training-aware work focused on parameter-free strategies, including LLaVA-PruMerge's attention-based merging~\citep{Shang2024:LLaVA-PruMerge} and VoCo-LLaMA's compressed VoCo tokens~\citep{ye2024voco}. Subsequent research explored architectural modifications through lightweight component replacements~\citep{instructblip,dong2024internlm,li2024llama}. More advanced approaches include MQT-LLaVA's dynamic Q-former for variable-length token encoding~\citep{hu2024matryoshka} and TokenPacker's coarse-to-fine iterative condensation~\citep{li2024tokenpacker}. Recent methods like LLaVA-Mini~\citep{zhang2025llava} achieve near-lossless token compression through extra auxiliary modules, though they often overlook the training challenges introduced by feature space perturbations during token compression.

\section{Methodology}
\label{sec:method}
\vspace{-2mm}
\subsection{Preliminary}
\label{sec:preliminary}

\vspace{-2mm}
\paragraph{Multi-modal Large Language Models (MLLMs).}
An MLLM typically consists of three modules: a \textit{visual encoder} $\mathrm{VE}$, a \textit{modality projector} (\emph{e.g.}, MLP), and a \textit{language model} $\mathrm{LM}$. Given image $\mathcal{I}$, the visual encoder extracts patch-level features $\mathbf{z}_v \in \mathbb{R}^{N \times d_v}$, projected into visual tokens $\mathbf{e}_v \in \mathbb{R}^{N \times d_h}$:
\begin{equation}
    \mathbf{e}_v = \mathrm{MLP}(\mathrm{VE}(\mathcal{I})),
    \label{eq:visual_proj}
\end{equation}
where $N$ is the number of image patches, $d_v$ is the encoder output dimension, and $d_h$ is the LM hidden size.
Meanwhile, a text prompt $\mathcal{P}$ is tokenized and embedded into a sequence of text tokens $\mathbf{e}_t \in \mathbb{R}^{L \times d_h}$.
The visual and text tokens are then concatenated to form the full input sequence:
\begin{equation}
    \mathbf{x} = [\mathbf{e}_v; \mathbf{e}_t] \in \mathbb{R}^{(N + L) \times d_h} .
\end{equation}
Positional embeddings are added to $\mathbf{x}$ to encode spatial and sequential structure.
The language model then autoregressively generates output tokens \( y_i \in \mathcal{V} \) (with vocabulary \( \mathcal{V} \)) one token at a time:
\begin{equation}
    y_i = \mathrm{LM}(\mathbf{x}, y_{<i}), \quad \text{for } i = 0, 1, 2, \dots
    \label{eq:lm_generation}
\end{equation}
where $y_{<i} := \{y_0, y_1, \dots, y_{i-1}\}$ denotes the previously generated tokens.

\begin{figure*}[!t]
    \centering
    \vspace{-2mm}
    \includegraphics[width=1.0\linewidth, height=0.4\linewidth]{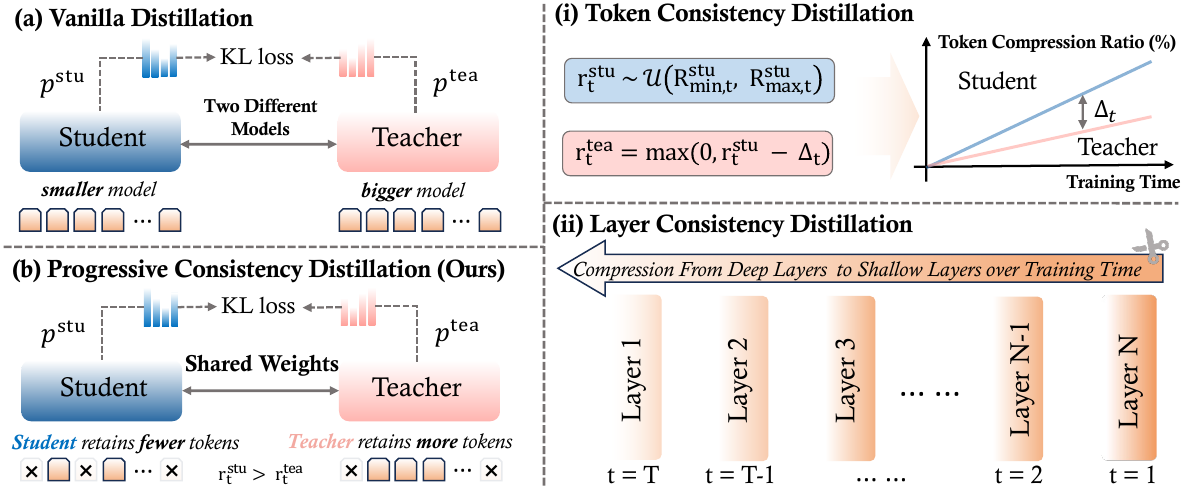}
    \caption{\textbf{An overview of Progressive Consistency Distillation.} (i) Token Consistency Distillation progressively increases token compression ratio over time. (ii) Layer Consistency Distillation shifts token compression from deep to shallow layers, promoting layer-wise consistency during training.}
    \vspace{-5.5mm}
    \label{fig:framework}
\end{figure*}

\vspace{-2mm}
\paragraph{Progressive Consistency Distillation Learning.}
Training MLLMs under aggressive token compression introduces significant feature space perturbations that hinder convergence. To address this, we propose \emph{Progressive Consistency Distillation Learning} (PCDL), 
which gradually increases compression difficulty and facilitates the model's progressive convergence to the final objective (see Fig.~\ref{fig:motivation}).
Our \textbf{Token Consistency Distillation (TCD)} progressively increases the token compression ratio over training steps from a token-wise perspective, while \textbf{Layer Consistency Distillation (LCD)} initially applies token compression in deeper layers (minimal impact~\citep{chen2024image,zhang2025llava}) before gradually shifting to shallower layers, performing progressive learning in a layer-wise manner. Both employ \emph{Consistency Distillation with Shared Weights}, where teachers with slightly lower compression ratios (\emph{e.g.}, $5\%$ less) provide manageable guidance~\citep{fani2011implications}, progressively transitioning to stronger teachers (\emph{e.g.}, $10\%$ lower ratio) for staged mentorship.

\vspace{-2mm}
\subsection{Theoretical Intuition: A 1D Prototype for Progressive Consistency Distillation Learning}
\label{sec:scalar-proto}
\vspace{-2mm}
\paragraph{Scalar center path.}
To provide intuition for \textit{Progressive Consistency Distillation Learning} (PCDL), we introduce a one-dimensional prototype where model predictions are scalar values \(\theta \in \mathbb{R}\), and each target is given by a compression-dependent center \(c_r\in\mathbb{R}\) for compression ratio \(r \in [0, r_{\max}]\). We assume that the mapping \(c : [0, r_{\max}] \to \mathbb{R}\) satisfies:
\vspace{-2mm}
\begin{enumerate}[leftmargin=*, topsep=2pt, itemsep=2pt, parsep=0pt, label=\textbf{(S\arabic*)}]
  \item \emph{Monotonicity:} \(c\) is differentiable with \(c'_r \ge 0\) for all \(r\).
  \item \emph{Lipschitz slope:} There exists \(\gamma > 0\) such that \(|c'_r| \le \gamma\), and \(|c_{r_1} - c_{r_2}| \le \gamma |r_1 - r_2|\) for all \(r_1, r_2\).
  \item \emph{Convexity:} \(c\) is convex, i.e., \(c''(r) \ge 0\).
\end{enumerate}
\vspace{-3mm}
\paragraph{Two quadratic objectives.}
We compare two learning objectives:
\[
\cL_{\mathrm{dir}}(r, \theta) = \tfrac{1}{2} (\theta - c_r)^2,\qquad 
\cL_{\mathrm{prog}}(r, \theta) = \tfrac{1}{2} (\theta - c_r)^2 + \tfrac{\lambda}{2} (\theta - c_{r - \Delta})^2,
\]
with constants \(0 < \lambda < 1\), \(0 < \Delta \le r_{\max}\). The second term acts as a regularizer pulls the prediction toward a slightly less compressed teacher target, mimicking the KL-based distillation loss in \texttt{\algname}.

\paragraph{Analogy to \algname.}
In \texttt{\algname}, the teacher operates at a slightly lower compression ratio \((r - \Delta)\) and provides smoother targets via KL consistency. The scalar prototype captures this: 
\(\cL_{\mathrm{dir}}\) corresponds to 
direct supervision from the current target \(c_r\), while \(\cL_{\mathrm{prog}}\) encodes progressive distillation by incorporating a past (easier) target \(c_{r-\Delta}\). As in \texttt{\algname}, the regularizer improves training stability by reducing sensitivity to abrupt changes in the input space induced by token compression.

\paragraph{Exact minimizers.}
Both objectives are strongly convex in \(\theta\) with closed-form solutions:
\[
  \theta^{\mathrm{dir}}_r = c_r,
  \qquad
  \theta^{\mathrm{prog}}_r = \frac{c_r + \lambda\,c_{r - \Delta}}{1 + \lambda}.
\]

\paragraph{Schedule and path length.}
Let \(0 = r_0 < r_1 < \dots < r_T = r_{\max}\) denote any compression schedule. The total variation of a path \(\{x_t\}\) is defined as:
\[
  \TV(\{x_t\}) := \sum_{t=0}^{T-1} |x_{t+1} - x_t|.
\]
We show below that the progressive path yields strictly shorter total variation, reflecting the smoother optimization trajectory induced by PCDL. We delay the full proof to Appendix~\ref{app_sec:proof}.

\begin{theorem}[Scalar path gain, Proof in Appendix~\ref{app_sec:proof}]
\label{thm:scalar}
Under assumptions \textbf{(S1)}–\textbf{(S3)}, the total variation of the progressive path is strictly smaller:
\begin{align*}
    & \TV\bigl(\{\theta^{\mathrm{dir}}_{r_t}\}\bigr) \le \gamma r_{\max}, \\
    & \TV\bigl(\{\theta^{\mathrm{prog}}_{r_t}\}\bigr) 
    \le \frac{1 + \lambda \kappa}{1 + \lambda} \cdot \TV\bigl(\{\theta^{\mathrm{dir}}_{r_t}\}\bigr)
    \;<\; \TV\bigl(\{\theta^{\mathrm{dir}}_{r_t}\}\bigr),
\end{align*}
\[
  \text{where} \qquad \kappa := \sup_{r \in [\Delta,\, r_{\max} - \Delta]}
  \frac{c(r) - c(r - \Delta)}{c(r + \Delta) - c(r)} \in [0, 1).
\]
\end{theorem}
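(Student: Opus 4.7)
My plan is to evaluate $\TV$ explicitly on each path by using the monotonicity of $c$ to strip the absolute values, turning both sums into telescoping ones; the only nontrivial step is then to bound the shifted endpoint gap by the unshifted one, which is where convexity and the definition of $\kappa$ come into play.

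\textbf{Step 1 (direct path).} Because $\theta^{\mathrm{dir}}_r = c_r$ and assumption \textbf{(S1)} forces every increment $c_{r_{t+1}} - c_{r_t}$ to be non-negative, the sum $\TV(\{\theta^{\mathrm{dir}}_{r_t}\})$ collapses telescopically to $c(r_{\max}) - c(0)$, and the Lipschitz bound \textbf{(S2)} upgrades this to $\le \gamma r_{\max}$, yielding the first claim.

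\textbf{Step 2 (progressive path).} Substituting the closed form $\theta^{\mathrm{prog}}_r = (c_r + \lambda c_{r-\Delta})/(1+\lambda)$, the per-step increment splits as
\[
  \theta^{\mathrm{prog}}_{r_{t+1}} - \theta^{\mathrm{prog}}_{r_t}
  = \frac{(c_{r_{t+1}} - c_{r_t}) + \lambda\,(c_{r_{t+1}-\Delta} - c_{r_t-\Delta})}{1+\lambda}.
\]
Both summands are non-negative by \textbf{(S1)}, so the absolute values again drop out. Each of the two component sums telescopes independently over the schedule, producing
\[
  \TV\bigl(\{\theta^{\mathrm{prog}}_{r_t}\}\bigr)
  = \frac{[c(r_{\max}) - c(0)] + \lambda\,[c(r_{\max}-\Delta) - c(-\Delta)]}{1+\lambda}.
\]
Note that this expression depends only on the endpoints, not on the particular schedule, which is crucial: the inequality to be proved is then schedule-independent.

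\textbf{Step 3 (main obstacle: the convexity step).} The remaining task is to show the aggregate bound $c(r_{\max}-\Delta) - c(-\Delta) \le \kappa\,[c(r_{\max}) - c(0)]$, which is what upgrades a pointwise convexity fact into a global path-length contraction. By the definition of $\kappa$ together with \textbf{(S3)}, at every $r \in [\Delta,\,r_{\max}-\Delta]$ one has the pointwise ratio bound $c(r) - c(r-\Delta) \le \kappa\,[c(r+\Delta) - c(r)]$. To lift this to the interval scale I plan to partition $[0, r_{\max}]$ into consecutive blocks of length $\Delta$, apply the pointwise bound at each block boundary, and telescope: the left-hand sides collect into $c(r_{\max}-\Delta) - c(-\Delta)$ while the right-hand sides collect into $\kappa\,[c(r_{\max}) - c(0)]$. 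Any non-integer residual when $r_{\max}/\Delta \notin \mathbb{N}$ is absorbed by applying the same monotone-slope comparison on the last shorter block. This is the step I expect to be the most delicate, since the shifted span and the forward span are not on the same partition grid and care is needed to avoid double-counting the boundary increments. Strict inequality $\kappa < 1$ then comes from \textbf{(S3)} combined with $c$ not being affine on the entire range. Inserting this bound into the expression from Step~2 yields
\[
  \TV\bigl(\{\theta^{\mathrm{prog}}_{r_t}\}\bigr) \le \frac{1+\lambda\kappa}{1+\lambda}\,[c(r_{\max})-c(0)] = \frac{1+\lambda\kappa}{1+\lambda}\,\TV\bigl(\{\theta^{\mathrm{dir}}_{r_t}\}\bigr),
\]
and $(1+\lambda\kappa)/(1+\lambda) < 1$ because $0 < \lambda$ and $\kappa < 1$, completing the argument.
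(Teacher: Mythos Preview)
Your route is genuinely different from the paper's. The paper never telescopes the progressive sum: it bounds each per-step ratio $\kappa_t := [c(r_{t+1}-\Delta)-c(r_t-\Delta)]/[c(r_{t+1})-c(r_t)]$ by the global $\kappa$ and then sums the resulting per-step inequalities. You instead telescope both sums to their endpoints first, reducing everything to the single aggregate inequality $c(r_{\max}-\Delta)-c(-\Delta)\le\kappa\,[c(r_{\max})-c(0)]$, which you then attack by a $\Delta$-grid partition. Your approach has the virtue of making the schedule-independence of both total variations explicit up front, and it sidesteps a hand-wave in the paper (the paper asserts $\kappa_t\le\kappa$ only for ``appropriately spaced'' schedules, glossing over the fact that $\kappa_t$ compares increments of width $r_{t+1}-r_t$ whereas $\kappa$ is defined via increments of width $\Delta$). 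Conversely, the paper's per-step comparison stays local and avoids having to chain inequalities along an auxiliary $\Delta$-partition.

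One concrete gap in your Step~3, beyond the grid misalignment you already flag: the telescoped endpoint $c(-\Delta)$ lies outside the stated domain $c:[0,r_{\max}]\to\mathbb{R}$, and even granting an extension, the pointwise $\kappa$-bound you need at the boundary node $r=0$ is not covered by the supremum defining $\kappa$, which runs only over $[\Delta,\,r_{\max}-\Delta]$. The clean fix is to read the teacher target with clamping (as the algorithmic version of the method actually does, taking $\max(0,r-\Delta)$), so that the lagged endpoint becomes $c(0)$ rather than $c(-\Delta)$; then applying the $\kappa$-bound at $r=\Delta,2\Delta,\ldots,r_{\max}-\Delta$ telescopes exactly to $c(r_{\max}-\Delta)-c(0)\le\kappa\,[c(r_{\max})-c(\Delta)]\le\kappa\,[c(r_{\max})-c(0)]$, and the argument closes.
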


\subsection{Token Consistency Distillation}
\vspace{-2mm}
We consider a single MLLM $f_\theta$ that plays both teacher and student roles by sharing parameters $\theta$.  
Let $\mathcal{I}$ denote the input image and $\mathcal{P}$ denote the language prompt.  
Let $C(\mathcal{I}, r, \ell)$ be a token compression operator\footnote{Any plug-and-play token compressor (\emph{e.g.}, FastV~\citep{chen2024image}, DART~\citep{wen2025stop}) can serve as compression operator here.} that, at layer $\ell$, compresses the visual tokens extracted from $\mathcal{I}$ with ratio $r \in [0, 1]$, \emph{i.e.}, retaining only a fraction $1 - r$ of visual tokens. Let $t\in\{0,\dots,T\}$ index training steps.

At iteration $t$, we sample a student compression ratio from a gradually shifting uniform distribution:
\begin{equation}
    r^{\student}_t \sim \mathcal{U}\bigl(R^{\student}_{\text{min},t},\; R^{\student}_{\text{max},t}\bigr),
\end{equation}
where $R^{\student}_{\text{max},t}$ linearly increases from $R^{\student}_{\text{max},0} = \epsilon$ (\emph{e.g.}, $5\%$) to $R^{\student}_{\text{max},T} = R_{\max}$ (\emph{e.g.}, $90\%$) with training steps, and $R^{\student}_{\text{min},t}$ grows more slowly from $0\%$ to at most $50\%$ as training progresses. This creates an easy-to-hard curriculum for the student model by gradually narrowing and shifting the sampling range towards more aggressive compression.

We then define the teacher model’s compression ratio as
\begin{equation}
    r^{\teacher}_t = \max\bigl(0,\; r^{\student}_t - \Delta_t \bigr),
\end{equation}
where the compression gap $\Delta_t$ also increases gradually from $\Delta_0 = \delta_{\min}$ to $\Delta_T = \delta_{\max}$ (\emph{e.g.}, up to $30\%$), ensuring that the teacher consistently sees slightly less compressed inputs than the student.

Concretely, we form two forward passes through the shared model $f_\theta$:  
\begin{equation}
    \mathbf{h}^{\teacher} = f_\theta\bigl(C(\mathcal{I}, r^{\teacher}_t, \ell); \mathcal{P} \bigr); \quad
    \mathbf{h}^{\student} = f_\theta\bigl(C(\mathcal{I}, r^{\student}_t, \ell); \mathcal{P} \bigr),
\end{equation}
where both token compressions occur at the fixed Transformer layer $\ell$.

We incorporate the Token Consistency Distillation (TCD) loss as an auxiliary objective alongside the main supervised fine-tuning (SFT) loss. Specifically, the overall training objective is formulated as:
\begin{equation}
    \mathcal{L}_{\text{total}}(\theta) = (1 - \lambda) \cdot \mathcal{L}_{\text{SFT}}(\theta) + \lambda \cdot \mathcal{L}_{\text{TCD}}(\theta),
\end{equation}
where $\mathcal{L}_{\text{SFT}}(\theta)$ denotes the autoregressive cross-entropy loss over language outputs, and $\lambda$ is a balancing coefficient for the distillation loss, empirically set to $0.7$.

The TCD loss is defined as the Kullback–Leibler (KL) divergence~\citep{kullback1951information,hinton2015distilling} between the output logits distributions of the teacher and student:
\begin{equation}
    \mathcal{L}_{\mathrm{TCD}}(\theta)
    = \mathbb{E}_{\mathcal{I},\mathcal{P},t}\left[\mathrm{KL}\left(p^{\teacher} \;\|\; p^{\student}\right)\right],
\end{equation}
where $p^{\teacher} = \mathrm{Softmax}(\mathbf{h}^{\teacher} / \tau)$ and $p^{\student} = \mathrm{Softmax}(\mathbf{h}^{\student} / \tau)$ are the temperature-scaled output logits distributions from the teacher and student, respectively, and $\tau$ is a temperature hyperparameter.

By progressively increasing both the student's compression range $\left[R^{\student}_{\text{min},t}, R^{\student}_{\text{max},t}\right]$ and the teacher–student gap $\Delta_t$, we implement an progressive learning in a token-wise manner. At early training stages, the student experiences mild compression and benefits from a closely aligned teacher, while in later stages, it endures stronger compression with increasingly distinct teacher guidance.

\subsection{Layer Consistency Distillation}
\vspace{-2mm}

Prior work~\citep{chen2024image,zhang2025llava} shows visual tokens receive negligible attention in deeper layers while shallow layers play a more critical role for visual modality. This motivates our \textbf{Layer Consistency Distillation (LCD)} strategy: performing compression in deeper layers first minimizes output perturbation and feature space distortion, then progressively moves to shallower layers.

Let $L$ be the total number of Transformer layers in the language model. Define a normalized training progress: $\beta_t = \frac{t}{T}$.
We then select a single compression layer for both teacher and student:  
\begin{equation}
\ell_t = \mathrm{Round}\bigl(L - \beta_t (L - \ell_{\min})\bigr),
\quad \ell_t \in {\ell_{\min}, \ell_{\min}+1, \dots, L}
\end{equation}
where $\ell_{\min}$ denotes the shallowest compression layer.
Thus at $t=0$ we compress at the deepest layer $\ell_0 = L$, and by $t=T$ at the shallowest layer $\ell_T = \ell_{\min}$, realizing a layer-wise progressive learning.

At training iteration $t$, we sample the student model’s token compression ratio from $[r_{\min},r_{\max}]$:
\begin{equation}
    r^{\student}_t \sim \mathcal{U}(r_{\min},\,r_{\max}),
\end{equation}
Typically, $r_{\min}$ is set to $0.2$, while $r_{\max}$ is configured as $0.9$ and define the teacher model’s 
compression ratio by subtracting the compression ratio gap $\Delta_t$:
\begin{equation}
    r^{\teacher}_t = \max\bigl(0,\;r^{\student}_t - \Delta_t \bigr).
\end{equation}
We then perform two forward passes through the shared model $f_\theta$ at layer $\ell_t$:
\begin{equation}
\mathbf{h}^{\teacher} = f_\theta\bigl(C(\mathcal{I},r^{\teacher}_t,\ell_t);\mathcal{P}\bigr); \quad
\mathbf{h}^{\student} = f_\theta\bigl(C(\mathcal{I},r^{\student}_t,\ell_t);\mathcal{P}\bigr).
\end{equation}
The Layer Consistency Distillation (LCD) loss is defined analogously to TCD, as the KL divergence between the teacher and student output logits distributions:
\begin{equation}
    \mathcal{L}_{\mathrm{LCD}}(\theta)
    = \mathbb{E}_{\mathcal{I},\mathcal{P},t}\!\bigl[\mathrm{KL}(p^{\teacher} \,\|\, p^{\student})\bigr];
    \quad
    p^{\teacher} = \mathrm{Softmax}(\mathbf{h}^{\teacher}/\tau),
    p^{\student} = \mathrm{Softmax}(\mathbf{h}^{\student}/\tau).
\end{equation}

Finally, we integrate both distillation terms into the overall training objective:
\begin{equation}
    \mathcal{L}_{\mathrm{total}}(\theta)
    = (1-\lambda) \cdot \mathcal{L}_{\mathrm{SFT}}(\theta)
      + \lambda \cdot \mathcal{L}_{\mathrm{LCD}}(\theta).
\end{equation}

\section{Experiments}
\label{sec:exp}
\subsection{Experimental Setting}
\label{sec:experimental_setting}

\paragraph{Implementation Details.}
\label{sec:implementation}
We implement \texttt{\algname} based on LLaVA~\citep{liu2023llava, liu2024llavanext} without introducing any modifications to the model architecture.
Specifically, we adopt CLIP ViT-L/14~\citep{radford2021learning} as our vision encoder, 
utilizing its officially pretrained projector, and Vicuna-v1.5~\citep{Vicuna} as our LLM.
\texttt{algname} only requires performing the second stage training, which involves visual instruction tuning on the LLaVA-665K instruction fine-tuning dataset. 
To demonstrate the effectiveness and generalizability of our method, we incorporate three representative token compression techniques (DART~\citep{wen2025stop}, FastV~\citep{chen2024image}, and Random token pruning) into \texttt{\algname} for training.
More implementation details about our proposed method and baselines are provided in Appendix~\ref{app_sec:train_detail}.

\paragraph{Evaluation Benchmarks.} 
We evaluate our model across 10 representative visual understanding benchmarks. Further details about benchmarks can be found in the Appendix~\ref{app_sec:benchmarks}.

\paragraph{Baselines.}
As shown in Table~\ref{table:main_result}, we compare our method with various visual token compression techniques, including QT-LLaVA
, MQT-LLaVA~\citep{hu2024matryoshka}, 
LLaMA-VID~\citep{li2024llama}, VoCo-LLaMA~\citep{ye2024voco}, TokenPacker~\citep{li2024tokenpacker}, and LLaVA-Mini~\citep{zhang2025llava}.
We also list other MLLM's results for comparison, including BLIP-2~\citep{li2023blip}, InstructBLIP~\citep{instruct-blip}, IDEFICS~\citep{laurenccon2023obelics}, Qwen-VL~\citep{Bai:Qwen-VL}, Qwen-VL-Chat~\citep{Bai:Qwen-VL}, SPHINX~\citep{lin2023sphinx}, mPLUG-Owl2~\citep{ye2024mplug}, and vanilla LLaVA~\citep{liu2023llava, liu2024llavanext}. Please refer to Appendix~\ref{app_sec:baselines} for more details.

\subsection{Experimental Results on Benchmarks}
\label{sec:main_results}
\definecolor{mygray}{gray}{.92}
\begin{table*}[!t]
\caption{Performance on 10 visual understanding benchmarks. ``Res.'' is resolution, and `\#Vision Tokens' is the number of vision tokens. Both training and inference employ DART as the token compression strategy for our methods. Parentheses in Avg.(\%) column show diffs vs. LLaVA-v1.5.}
\label{table:main_result}
\centering
\tiny
\begin{tabular}
{L{1.75cm}L{1.12cm}C{0.2cm}|C{0.55cm}|C{0.3cm}C{0.3cm}C{0.3cm}C{0.3cm}C{0.3cm}C{0.3cm}C{0.38cm}C{0.3cm}C{0.3cm}C{0.38cm}|C{0.84cm}} 
\toprule
\textbf{Methods} & \textbf{LLM} & \textbf{Res.} & \textbf{\begin{tabular}[c]{@{}c@{}}\#Vision\\ Tokens\end{tabular}} & $\!\!\textbf{\text{VQA}}^{\!\text{V2}}$ & $\!\!$\textbf{GQA} & $\!\!\!$\textbf{VizWiz} & $\!\!$$\textbf{\text{SQA}}^{\!\text{I}}$ & $\!\!$$\textbf{\text{VQA}}^{\!\text{T}}$ & $\!\!$\textbf{POPE} & \textbf{MME} & $\!\!$\textbf{MMB} & $\!$\textbf{\begin{tabular}[c]{@{}c@{}}MMB-\\ CN\end{tabular}} & $\!$\textbf{\begin{tabular}[c]{@{}c@{}}OCR\\ Bench\end{tabular}} & \textbf{\begin{tabular}[c]{@{}c@{}}Avg.\\ (\%)\end{tabular}} \\
\midrule
\textbf{BLIP-2}~\citep{li2023blip} & Vicuna-13B & 224 & 32 & 65.0 & 41.0 & 19.6 & 61.0 & 42.5 & 85.3 & -- & – & -- & -- & -- \\
\textbf{InstructBLIP}~\citep{instructblip} & Vicuna-7B & 224 & 32 & 66.3 & 49.2 & 34.5 & 60.5 & 50.1 & 83.9 & 1500 & 36.0 & -- & 259 & -- \\
\textbf{InstructBLIP}~\citep{instructblip} & Vicuna-13B & 224 & 32 & 64.2 & 49.5 & 33.4 & 63.1 & -- & 84.1 & 1530 & 36.9 & 17.4 & 252 & --  \\
\textbf{IDEFICS-9B}~\citep{laurenccon2023obelics} & LLaMA-7B & 224 & 64 & 50.9 & 38.4 & 35.5 & – & 25.9 & 75.3 & 1027 & 48.2 & -- & 245 & -- \\
\textbf{IDEFICS-80B}~\citep{laurenccon2023obelics} & LLaMA-65B & 224 & 64 & 60.0 & 45.2 & 36.0 & -- & 30.9 & -- & 1076 & 54.5 & 29.1 & 277 & -- \\
\textbf{Qwen-VL}~\citep{Bai:Qwen-VL} & Qwen-7B & 448 & 256 & -- & 59.3 & 35.2 & 67.1 & 63.8 & -- & 1708 & 38.2 & -- & 133 & -- \\
\textbf{Qwen-VL-Chat}~\citep{Bai:Qwen-VL} & Qwen-7B & 448 & 256 & -- & 57.5 & 38.9 & 68.2 & 61.5 & -- & 1891 & 60.6 & -- & 267 & -- \\
\textbf{SPHINX}~\citep{lin2023sphinx} & LLaMA-13B & 224 & 289 & 78.1 & 62.6 & 39.9 & 69.3 & 51.6 & 80.7 & -- & 66.9 & -- & -- & -- \\
\textbf{SPHINX-2k}~\citep{lin2023sphinx} & LLaMA-13B & 762 & 2890 & 80.7 & 63.1 & 44.9 & 70.6 & 61.2 & 87.2 & -- & 65.9 & -- & -- & -- \\
\textbf{mPLUG-Owl2}~\citep{ye2024mplug} & LLaMA-7B & 448 & 1024 & 79.4 & 56.1 & 54.5 & 68.7 & 54.3 & - & -- & 64.5 & -- & -- & -- \\
\textbf{Video-LLaVA}~\citep{lin2023video} & Vicuna-7B & 224 & 256 & 65.9 & 60.3 & 48.1 & 66.4 & 51.8 & 83.1 & 1542 & 60.6 & 49.3 & 161 & 55.7 \\
\textbf{LLaVA-v1.5}~\citep{liu2023llava} & Vicuna-7B & 336 & 576 & 72.2 & 61.9 & 52.5 & 68.3 & 58.1 & 85.9 & 1785 & 64.1 & 55.8 & 319 & 61.4 \\
\rowcolor{mygray}\multicolumn{15}{c}{\scriptsize\textit{\textbf{LMMs with fewer vision tokens}}} \\
\textbf{Average-Pooling} & Vicuna-7B & 336 & 64 & 63.0 & 55.5 & 48.4 & 68.6 & 52.6 & 79.2 & 1579 & 59.6 & 49.5 & 258 & 55.9 {\color[HTML]{C2183A} \fontsize{5pt}{6pt}\selectfont (-5.5)}    \\
\textbf{MQT-LLaVA}~\citep{hu2024matryoshka} & Vicuna-7B & 336 & 2 & 51.4 & 49.6 & 50.0 & 66.1 & 14.8 & 75.4 & 1402 & 48.9 & 40.5 & 169 & 46.4 {\color[HTML]{C2183A} \fontsize{5pt}{6pt}\selectfont (-15)} \\
\textbf{MQT-LLaVA}~\citep{hu2024matryoshka} & Vicuna-7B & 336 & 36 & 62.0 & 57.7 & 53.6 & 69.2 & 28.6 & 82.9 & 1777 & 60.5 & 51.6 & 244 & 55.4 {\color[HTML]{C2183A} \fontsize{5pt}{6pt}\selectfont (-6.0)} \\
\textbf{MQT-LLaVA}~\citep{hu2024matryoshka} & Vicuna-7B & 336 & 64 & 65.6 & 58.7 & 54.3 & 68.4 & 32.5 & 83.1 & 1810 & 61.3 & 53.7 & 260 & 56.8 {\color[HTML]{C2183A} \fontsize{5pt}{6pt}\selectfont (-4.6)}  \\
\textbf{MQT-LLaVA}~\citep{hu2024matryoshka} & Vicuna-7B & 336 & 128 & 66.2 & 59.8 & 54.6 & 69.3 & 35.7 & 84.3 & 1773 & 62.1 & 53.6 & 266 & 57.6 {\color[HTML]{C2183A} \fontsize{5pt}{6pt}\selectfont (-3.8)} \\
\textbf{MQT-LLaVA}~\citep{hu2024matryoshka} & Vicuna-7B & 336 & 192 & 66.9 & 59.9 & 54.6 & 69.1 & 35.8 & 85.1 & 1784 & 62.0 & 53.9 & 263 & 57.7 {\color[HTML]{C2183A} \fontsize{5pt}{6pt}\selectfont (-3.7)} \\
\textbf{MQT-LLaVA}~\citep{hu2024matryoshka} & Vicuna-7B & 336 & 256 & 68.3 & 60.1 & 54.6 & 69.0 & 37.1 & 84.6 & 1740 & 61.7 & 53.0 & 273 & 57.8 {\color[HTML]{C2183A} \fontsize{5pt}{6pt}\selectfont (-3.6)} \\
\textbf{QT-LLaVA} & Vicuna-7B & 336 & 256 & -- & 60.3 & 51.5 & 68.1 & 36.9 & 84.1 & 1771 & 62.1 & 53.9 & 265 & --  \\
\textbf{LLaMA-VID}~\citep{li2024llama} & Vicuna-7B & 336 & 2 & -- & 55.5 & 54.2 & 68.8 & 49.0 & 83.1 & -- & -- & -- & -- & --  \\
\textbf{VoCo-LLaMA}~\citep{ye2024voco} & Vicuna-7B & 336 & 1 & -- & 55.6 & 54.6 & 68.4 & 31.7 & 80.8 & 1594 & 56.4 & 46.2 & 69 & -- \\
\textbf{TokenPacker}~\citep{li2024tokenpacker} & Vicuna-7B & 336 & 144 & 71.3 & 62.0 & 56.6 & 70.5 & 43.8 & 86.2 & 1716 & 63.9 & 53.4 & 303 & 59.9 {\color[HTML]{C2183A} \fontsize{5pt}{6pt}\selectfont (-1.5)} \\
\textbf{TokenPacker}~\citep{li2024tokenpacker} & Vicuna-7B & 336 & 36 & -- & 58.6 & 50.2 & -- & -- & 83.7 & -- & 62.8 & -- & -- & --   \\
\textbf{LLaVA-Mini}~\citep{zhang2025llava} & Vicuna-7B & 336 & 144 & 58.1 & 56.3 & 14.8 & 25.3 & 26.0 & 82.3 & 1325 & 24.8 & -- & 132 & -- \\
\textbf{LLaVA-Mini}~\citep{zhang2025llava} & Vicuna-7B & 336 & 64 & - & 56.6 & 10.4 & 27.4 & 28.1 & 82.3 & 1324 & 23.8 & -- & 145 & --   \\
\rowcolor{mygray}\multicolumn{15}{c}{\scriptsize\textit{\textbf{Ours}}} \\
\textbf{LLaVA-v1.5 + TCD} & Vicuna-7B & 336 & 256 & 72.7 & 61.4 & 54.1 & 69.8 & 57.0 & 85.8 & 1807 & 66.1 & 54.8 & 310 & 61.7 {\color[HTML]{18A6C2} \fontsize{5pt}{6pt}\selectfont (+0.3)} \\
\textbf{LLaVA-v1.5 + TCD} & Vicuna-7B & 336 & 192 & 71.6 & 60.9 & 54.0 & 70.0 & 56.9 & 85.3 & 1813 & 65.8 & 54.6 & 304 & 61.4 {\color[HTML]{18A6C2} \fontsize{5pt}{6pt}\selectfont (+0.0)} \\
\textbf{LLaVA-v1.5 + TCD} & Vicuna-7B & 336 & 128 & 69.7 & 59.9 & 54.9 & 70.8 & 56.6 & 84.5 & 1861 & 65.6 & 54.3 & 299 & 61.3 {\color[HTML]{C2183A} \fontsize{5pt}{6pt}\selectfont (-0.1)}    \\
\textbf{LLaVA-v1.5 + TCD} & Vicuna-7B & 336 & 64 & 66.1 & 57.1 & 55.1 & 71.1 & 54.8 & 79.2 & 1809 & 64.2 & 53.0 & 286 & 59.4 {\color[HTML]{C2183A} \fontsize{5pt}{6pt}\selectfont (-2.0)}  \\
\textbf{LLaVA-v1.5 + TCD} & Vicuna-7B & 336 & 36 & 62.1 & 54.9 & 55.2 & 71.3 & 53.6 & 75.8 & 1747 & 62.4 & 51.5 & 262 & 57.5 {\color[HTML]{C2183A} \fontsize{5pt}{6pt}\selectfont (-3.9)}  \\
[0.2mm]\specialrule{0.05pt}{0.01pt}{0.0pt} 
\textbf{LLaVA-v1.5 + LCD} & Vicuna-7B & 336 & 256 & 72.6 & 62.0 & 57.4 & 69.8 & 56.8 & 86.1 & 1834 & 64.3 & 56.0 & 312 & 62.2 {\color[HTML]{18A6C2} \fontsize{5pt}{6pt}\selectfont (+0.8)} \\ 
\textbf{LLaVA-v1.5 + LCD} & Vicuna-7B & 336 & 192 & 71.3 & 61.5 & 57.6 & 70.0 & 56.7 & 85.3 & 1830 & 64.4 & 55.9 & 316 & 62.0 {\color[HTML]{18A6C2} \fontsize{5pt}{6pt}\selectfont (+0.6)} \\ 
\textbf{LLaVA-v1.5 + LCD} & Vicuna-7B & 336 & 128 & 69.2 & 60.6 & 57.9 & 69.8 & 56.3 & 84.2 & 1832 & 64.1 & 55.3 & 306 & 61.3 {\color[HTML]{C2183A} \fontsize{5pt}{6pt}\selectfont (-0.1)} \\ 
\textbf{LLaVA-v1.5 + LCD} & Vicuna-7B & 336 & 64 & 66.0 & 58.3 & 57.8 & 69.7 & 54.3 & 81.2 & 1794 & 62.1 & 52.9 & 280 & 59.4 {\color[HTML]{C2183A} \fontsize{5pt}{6pt}\selectfont (-2.0)} \\ 
\textbf{LLaVA-v1.5 + LCD} & Vicuna-7B & 336 & 36 & 62.8 & 56.5 & 56.5 & 70.3 & 52.9 & 77.7 & 1711 & 60.7 & 51.0 & 265 & 57.6 {\color[HTML]{C2183A} \fontsize{5pt}{6pt}\selectfont (-3.8)}\\ 
\bottomrule 
\end{tabular}
\end{table*}
Table~\ref{table:main_result} presents experimental results on 10 representative visual benchmarks.
Notably, our method and MQT-LLaVA are among the few approaches enabling flexible control over token compression ($36 \sim 256$ tokens) with a single trained model, adapting efficiently to varying resource constraints.
When retaining $128$ tokens, our framework achieves performance comparable to vanilla LLaVA-v1.5-7B, while surpassing it with $192+$ visual tokens, strongly suggesting significant redundancy in visual tokens. Compared to other training-aware methods involving model modifications (\emph{e.g.}, MQT-LLaVA, TokenPacker), our approach maintains superior average performance, particularly excelling on MME, MMBench, and VQA V2. This indicates that effective training strategies are as crucial as architectural modifications for token compression.
The results also demonstrate our method's robustness across compression ratios: with just 64 tokens, performance degrades by merely $2\%$ versus vanilla LLaVA, while maintaining minimal ($<1\%$) variation at $128 \sim 256$ visual tokens.

\subsection{Efficiency}
\label{sec: efficiency}
\begin{table*}[!ht]
\centering
\caption{Inference efficiency analysis of \texttt{\algname}. 
$\Delta$ denotes the reduction ratio. 
All experiments are on POPE ($8,910$ samples) using an A100 GPU. Token compression is fixed at the 2nd layer.}
\resizebox{0.81\linewidth}{!}{
\begin{tabular}{l|c|c@{\hspace{0.5em}}c|c@{\hspace{0.5em}}c|cc}
\toprule
\rowcolor{gray!7}
\textbf{Method} & \textbf{Visual Tokens} & \multicolumn{2}{c|}{\textbf{KV cache (MB) $\downarrow$}} & \multicolumn{2}{c|}{\textbf{CUDA Time (s) $\downarrow$}} & \multicolumn{2}{c}{\textbf{FLOPs (T) $\downarrow$}} \\
\cmidrule(lr){3-4} \cmidrule(lr){5-6} \cmidrule(lr){7-8}
 & & \textbf{Value} & \textbf{$\Delta$} & \textbf{Value} & \textbf{$\Delta$} & \textbf{Value} & \textbf{$\Delta$} \\
\midrule
LLaVA-v1.5-7B & 576 & 367.2 & -- & 1103.5 & -- & 9.3 & -- \\
\rowcolor{green!5}
\texttt{\algname} + FastV~\citep{chen2024image} & 64 & 40.9 & 88.9\% & 749.1 & 32.1\% & 1.5 & 83.9\% \\
\rowcolor{orange!5}
\texttt{\algname} + DART~\citep{wen2025stop} & 64 & 40.9 & 88.9\% & 744.3 & 32.6\% & 1.5 & 83.9\% \\
\rowcolor{blue!5}
\texttt{\algname} + Random & 64 & 40.9 & 88.9\% & 697.3 & 36.8\% & 1.5 & 83.9\% \\
\bottomrule
\end{tabular}}
\label{tab:inference_efficiency}
\end{table*}
We discuss the efficiency of \texttt{\algname}, including \textbf{training-time efficiency} and \textbf{inference-time efficiency}.
As shown in Table~\ref{tab:training_efficiency} in Appendix~\ref{app_sec:training_details}, in contrast to other training-aware token compression method, especially those that alter the model architecture, our approach only requires supervised fine-tuning, completing training in approximately $12$ hours on $8$ A100 GPUs.
Most token compression methods that modify the model architecture require two or even three training stages. Furthermore, the replacement or addition of new model components necessitates more training iterations to properly adapt these parameters, resulting in significantly greater computational expenditure (\emph{e.g.}, $30 \sim 48$ hours on $8$ A100 GPUs). This substantially increases the overall training cost.
Focus on the inference-time efficiency, which in our framework is primarily influenced by the token compression strategies employed during the inference process.
Table~\ref{tab:inference_efficiency} presents the KV cache memory usage, CUDA time, and FLOPs obtained when applying three different token compression methods during inference for models trained using Token Consistency Distillation. The FLOPs are calculated using \texttt{calflops}~\citep{calflops}, while the KV cache memory usage is estimated with the help of \texttt{LLM-Viewer}~\citep{yuan2024llm}.
It can be observed that when retaining 64 tokens, all methods achieve improvements in reducing KV cache memory, FLOPs, and latency. In particular, Random token compression, which incurs no additional computational overhead, achieves an actual speedup of nearly $1.6 \times$.

\section{Analyses}
\label{sec:analyses}
In this section, we conduct a thorough investigation aimed at addressing the following key questions:
\textbf{(1)} How much does the weight-sharing teacher guidance contribute to performance?
\textbf{(2)} What occurs without progressive learning in token/layer-wise dimensions?
\textbf{(3)} How generalizable is \texttt{\algname} across various token compression strategies?
\textbf{(4)} Is extreme token compression ($1$ or $2$ tokens) necessary?

\begin{table*}[!t]
\caption{
Ablation study on Token Consistency Distillation. ``w/o Distillation Loss'' disables teacher supervision by zeroing $\mathcal{L}_{\text{TCD}}$. 
``w/o Progressive Compression Ratio'' uses a fixed $88.9\%$ compression ratio. DART is employed as the token compression strategy during training.
}
\setlength{\tabcolsep}{2pt}
\renewcommand{\arraystretch}{1.1} 
\resizebox{\linewidth}{!}{  
\tiny
\centering
\begin{tabular}{l  | cccc ccccC{0.92cm}C{0.96cm} | c}
\toprule
\textbf{Method} & \textbf{$\text{VQA}^\text{V2}$} & \textbf{GQA} & \textbf{VizWiz} & \textbf{$\text{SQA}^\text{I}$} & \textbf{$\text{VQA}^\text{T}$} & \textbf{POPE} & \textbf{$\text{MME}$} &  \textbf{MMB}  & \textbf{MMB-CN} & \textbf{OCRBench} & \textbf{Avg. (\%)}  \\
 \hline
 \rowcolor{mygray}
\multicolumn{12}{c}{\textit{Retain 128 Tokens} \ $\fg{(\downarrow 77.8\%)}$} \\
 \textbf{TCD (Ours)} & 69.7 & 59.9 & 54.9 & 70.8 & 56.6 & 84.5 & 1861 & 65.6 & 54.3 & 299 & 61.3 \\
 w/o Distillation Loss & 67.2 & 60.4 & 53.2 & 68.6 & 57.3 & 83.2 & 1745 & 63.8 & 53.5 & 289 & 59.8 {\color[HTML]{C2183A} \fontsize{5pt}{6pt}\selectfont (-1.5)} \\
 w/o Progressive Compression Ratio & 67.1 & 58.9 & 49.7 & 70.0 & 54.1 & 84.3 & 1788 & 63.8 & 51.5 & 277 & 59.1 {\color[HTML]{C2183A} \fontsize{5pt}{6pt}\selectfont (-2.2)} \\
 \hline
 \rowcolor{mygray}
\multicolumn{12}{c}{\textit{Retain 64 Tokens} \ $\fg{(\downarrow 88.9\%)}$} \\
 \textbf{TCD (Ours)} & 66.1 & 57.1 & 55.1 & 71.1 & 54.8 & 79.2 & 1809 & 64.2 & 53.0 & 286 & 59.4 \\
 w/o Distillation Loss & 65.7 & 57.9 & 53.6 & 69.8 & 54.9 & 78.3 & 1671 & 61.6 & 52.1 & 272 & 58.1 {\color[HTML]{C2183A} \fontsize{5pt}{6pt}\selectfont (-1.3)} \\
 w/o Progressive Compression Ratio & 64.3 & 57.9 & 50.3 & 70.3 & 52.0 & 82.7 & 1774 & 62.8 & 52.5 & 255 & 58.2 {\color[HTML]{C2183A} \fontsize{5pt}{6pt}\selectfont (-1.2)} \\
\bottomrule
\end{tabular}}
\label{tab:abalation_study1}
\end{table*}

\subsection{Ablation Studies}
\label{sec:ablation}
\begin{table*}[!ht]
\caption{
Ablation study on Layer Consistency Distillation. 
``w/o Progressive Compression Layer'': fixed at the 2nd layer. DART is employed as the token compression strategy during training.
}
\centering
\setlength{\tabcolsep}{2pt}
\renewcommand{\arraystretch}{1.1} 
\resizebox{\linewidth}{!}{  
\tiny
\centering
\begin{tabular}{l  | cccc ccccC{0.92cm}C{0.96cm} | c}
\toprule
\textbf{Method} & \textbf{$\text{VQA}^\text{V2}$} & \textbf{GQA} & \textbf{VizWiz} & \textbf{$\text{SQA}^\text{I}$} & \textbf{$\text{VQA}^\text{T}$} & \textbf{POPE} & \textbf{$\text{MME}$} &  \textbf{MMB}  & \textbf{MMB-CN} & \textbf{OCRBench} & \textbf{Avg. (\%)}  \\
\hline
\rowcolor{mygray}
\multicolumn{12}{c}{\textit{Retain 128 Tokens} \ $\fg{(\downarrow 77.8\%)}$} \\
 \textbf{LCD (Ours)} & 69.2 & 60.6 & 57.9 & 69.8 & 56.3 & 84.2 & 1832 & 64.1 & 55.3 & 306 & 61.3 \\
 w/o Distillation Loss & 67.1 & 60.6 & 55.4 & 70.3 & 56.1 & 84.3 & 1761 & 62.9 & 55.4 & 301 & 60.5 {\color[HTML]{C2183A} \fontsize{5pt}{6pt}\selectfont (-0.8)} \\
 w/o Progressive Compression Layer & 68.7 & 59.3 & 54.3 & 70.6 & 56.2 & 82.2 & 1776 & 63.1 & 54.9 & 298 & 60.3 {\color[HTML]{C2183A} \fontsize{5pt}{6pt}\selectfont (-1.0)} \\
 \hline
 \rowcolor{mygray}
\multicolumn{12}{c}{\textit{Retain 64 Tokens} \ $\fg{(\downarrow 88.9\%)}$} \\
 \textbf{LCD (Ours)} & 66.0 & 58.3 & 57.8 & 69.7 & 54.3 & 81.2 & 1794 & 62.1 & 52.9 & 280 & 59.4 \\
 w/o Distillation Loss & 64.4 & 58.2 & 55.9 & 69.6 & 54.8 & 80.9 & 1735 & 61.3 & 52.7 & 275 & 58.7 {\color[HTML]{C2183A} \fontsize{5pt}{6pt}\selectfont (-0.7)} \\
 w/o Progressive Compression Layer & 63.5 & 56.5 & 54.7 & 71.5 & 54.2 & 75.6 & 1734 & 61.9 & 51.7 & 260 & 57.8 {\color[HTML]{C2183A} \fontsize{5pt}{6pt}\selectfont (-1.6)} \\
\bottomrule
\end{tabular}}
\label{tab:abalation_study2}
\end{table*}
To validate the guiding role of the weight-sharing teacher model (RQ1) and the importance of the progressive learning strategy in training-aware token compression (RQ2), we conducted ablation studies on both components.
Specifically, for token consistency distillation (TCD) and layer consistency distillation (LCD), in addition to eliminating the distillation loss to remove the teacher model’s influence, we also fix the compression ratio (\emph{e.g.}, $88.9\%$) and compression layer (\emph{e.g.}, the 2nd layer) to eliminate the progressive learning strategy in both the token-wise and layer-wise dimensions. This essentially degenerates the process into imposing a significant perturbation in the feature space, and the model is trained to adapt in the parameter space, like direct training.
As demonstrated in Tables~\ref{tab:abalation_study1} and \ref{tab:abalation_study2}, the experimental results indicate that without teacher guidance, both TCD and LCD exhibit significant performance degradation across multiple benchmarks on average, with particularly pronounced declines on vision-centric benchmarks such as MME and MMBench.

\subsection{How Well Does Proposed Framework Generalize across Different Methods?}
\label{sec:generalize}
\begin{figure*}[!th]
    \centering
    \includegraphics[width=1.0\linewidth]{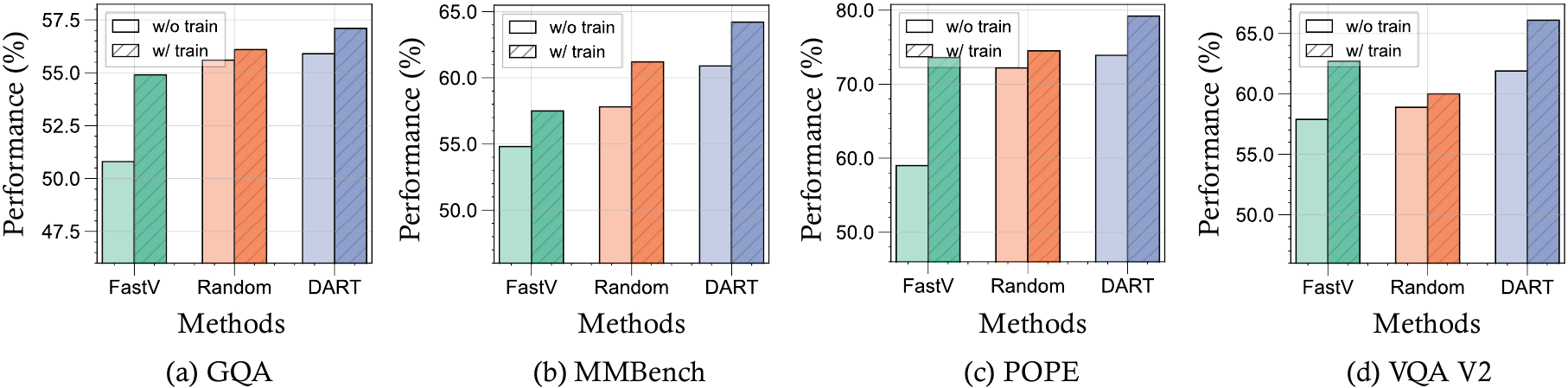}
    \caption{Following LLaVA-v1.5's architecture and data, we apply \textbf{DART} for token consistency distillation. ``w/o train'' denotes vanilla LLaVA. At inference, all methods use $88.9\%$ token compression.}
    \label{fig:generalized_exp}
\end{figure*}
Beyond the generalization across different compression ratios observed in our comparative experiments of Sec.~\ref{sec:main_results}, we further investigate the adaptability of our proposed framework to diverse token compression strategies (RQ3).
Specifically, as outlined in Sec.~\ref{sec:implementation}, we integrate three plug-and-play token compression strategies into our framework during training. We then conduct cross-strategy evaluations to assess how models trained with one specific strategy (\emph{e.g.}, FastV, DART) perform when applied with alternative compression methods during inference.
As shown in Figure~\ref{fig:generalized_exp}, token consistency distillation consistently improves model performance across all benchmarks and compression methods. Notably, even when trained solely with DART-based compression, the model generalizes well to FastV and Random compression, yielding consistent performance gains.
Furthermore, after training with our proposed framework, the performance gap between token compression strategies is significantly reduced. Notably, previously underperforming strategies exhibit more substantial improvements than their stronger counterparts.
For additional experiments and discussions on the generalization capability of our method, please refer to Appendix~\ref{app_sec:more_generalizability_experiments}.

\subsection{Is Extreme Token Compression Necessary?}
\begin{figure*}[!h]
    \centering
    \includegraphics[width=1.0\linewidth]{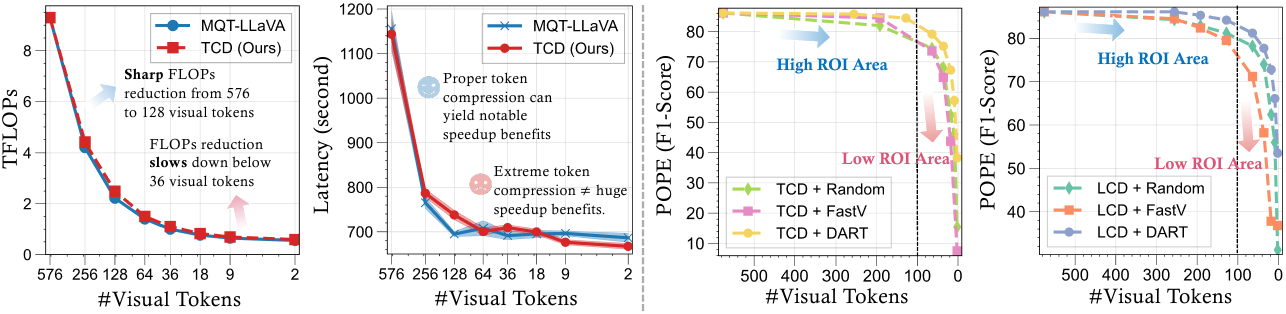}
    \caption{All experiments use the model trained following LLaVA-v1.5. FLOPs and latency are measured on the POPE. Visual token and latency experiments are repeated three times for reliability.}
    \label{fig:extreme_compression_analyses}
\end{figure*}

We observe that many token compression methods pursue aggressive compression (\emph{e.g.}, one or two tokens).
Table~\ref{table:main_result} shows that extreme token compression still leads to notable performance degradation for many methods.
While this consistently reduces KV cache memory, it raises the question: \emph{does such extreme compression always translate to faster inference} (RQ4)?
To answer this, we conducted detailed analysis and experiments.
Figure~\ref{fig:extreme_compression_analyses} shows the relationship between FLOPs and the number of retained visual tokens.
When reducing tokens from the full set ($576$ tokens) to $128$, FLOPs drop significantly—from $9.3$T to around $2$T.
However, under more extreme compression, FLOPs reduction becomes noticeably smaller.
A similar or even more pronounced trend is observed between token count and actual latency.
In some cases, retaining $64$ tokens yields better performance than fewer tokens (\emph{e.g.}, $36$ or $18$).
A possible hypothesis is that overly fragmented feature slices increase memory access time.
Moreover, reducing tokens to $64$ largely preserves vanilla model performance.
We refer to this range as the High Return-on-Investment \textbf{(High ROI)} Area.
Further reduction beyond $64$ offers only marginal latency gains but sharply degrades performance—this is the \textbf{Low ROI} Area.
Model efficiency depends on whether it is computation or memory-bound.
With heavily reduced tokens, GPU compute is underutilized and latency is dominated by memory access, making the system memory-bound, where further reduction brings little speedup.
Overall, we argue that extreme compression is unnecessary; instead, focus should be on balancing latency and performance.

\vspace{-1mm}
\section{Conclusion}
\vspace{-1mm}
In this paper, we propose \texttt{\algname}, a learning framework that enhances the efficiency of multi-modal large language models (MLLMs) via Progressive Consistency Distillation. It integrates with existing token compression strategies without modifying the model architecture, achieving efficiency in both training and inference. Experimental results demonstrate that MLLMs trained with our framework achieve comparable average performance to the vanilla model using only $128$ visual tokens. Notably, on 4 out of 10 visual understanding benchmarks, our approach even outperforms the vanilla model despite using significantly fewer visual tokens. Furthermore, extensive experiments validate the effectiveness of \texttt{\algname}, highlighting its robustness across token compression ratios and generalization across strategies. Meanwhile, our analysis reveals that while token compression offers significant benefits, excessive compression may lead to a poor latency-performance trade-off, underscoring the importance of balancing compression levels for optimal performance.


\bibliographystyle{plain}
\bibliography{neurips_2025}



\section*{NeurIPS Paper Checklist}

\begin{enumerate}

\item {\bf Claims}
    \item[] Question: Do the main claims made in the abstract and introduction accurately reflect the paper's contributions and scope?
    \item[] Answer: \answerYes{} 
    \item[] Justification: The main contributions and scope of this paper are outlined in the abstract and introduction sections.
    \item[] Guidelines:
    \begin{itemize}
        \item The answer NA means that the abstract and introduction do not include the claims made in the paper.
        \item The abstract and/or introduction should clearly state the claims made, including the contributions made in the paper and important assumptions and limitations. A No or NA answer to this question will not be perceived well by the reviewers. 
        \item The claims made should match theoretical and experimental results, and reflect how much the results can be expected to generalize to other settings. 
        \item It is fine to include aspirational goals as motivation as long as it is clear that these goals are not attained by the paper. 
    \end{itemize}

\item {\bf Limitations}
    \item[] Question: Does the paper discuss the limitations of the work performed by the authors?
    \item[] Answer: \answerYes{} 
    \item[] Justification: In Appendix~\ref{app_sec:future_works}, we outline the potential limitations of this work. 
    \item[] Guidelines:
    \begin{itemize}
        \item The answer NA means that the paper has no limitation while the answer No means that the paper has limitations, but those are not discussed in the paper. 
        \item The authors are encouraged to create a separate "Limitations" section in their paper.
        \item The paper should point out any strong assumptions and how robust the results are to violations of these assumptions (e.g., independence assumptions, noiseless settings, model well-specification, asymptotic approximations only holding locally). The authors should reflect on how these assumptions might be violated in practice and what the implications would be.
        \item The authors should reflect on the scope of the claims made, e.g., if the approach was only tested on a few datasets or with a few runs. In general, empirical results often depend on implicit assumptions, which should be articulated.
        \item The authors should reflect on the factors that influence the performance of the approach. For example, a facial recognition algorithm may perform poorly when image resolution is low or images are taken in low lighting. Or a speech-to-text system might not be used reliably to provide closed captions for online lectures because it fails to handle technical jargon.
        \item The authors should discuss the computational efficiency of the proposed algorithms and how they scale with dataset size.
        \item If applicable, the authors should discuss possible limitations of their approach to address problems of privacy and fairness.
        \item While the authors might fear that complete honesty about limitations might be used by reviewers as grounds for rejection, a worse outcome might be that reviewers discover limitations that aren't acknowledged in the paper. The authors should use their best judgment and recognize that individual actions in favor of transparency play an important role in developing norms that preserve the integrity of the community. Reviewers will be specifically instructed to not penalize honesty concerning limitations.
    \end{itemize}

\item {\bf Theory assumptions and proofs}
    \item[] Question: For each theoretical result, does the paper provide the full set of assumptions and a complete (and correct) proof?
    \item[] Answer: \answerYes{} 
    \item[] Justification: In the main paper and Appendix, we provide the full set of assumptions and complete proof.
    \item[] Guidelines:
    \begin{itemize}
        \item The answer NA means that the paper does not include theoretical results. 
        \item All the theorems, formulas, and proofs in the paper should be numbered and cross-referenced.
        \item All assumptions should be clearly stated or referenced in the statement of any theorems.
        \item The proofs can either appear in the main paper or the supplemental material, but if they appear in the supplemental material, the authors are encouraged to provide a short proof sketch to provide intuition. 
        \item Inversely, any informal proof provided in the core of the paper should be complemented by formal proofs provided in appendix or supplemental material.
        \item Theorems and Lemmas that the proof relies upon should be properly referenced. 
    \end{itemize}

    \item {\bf Experimental result reproducibility}
    \item[] Question: Does the paper fully disclose all the information needed to reproduce the main experimental results of the paper to the extent that it affects the main claims and/or conclusions of the paper (regardless of whether the code and data are provided or not)?
    \item[] Answer: \answerYes{} 
    \item[] Justification: We provide details of all model versions, dataset versions, hyperparameters, and experimental settings in Sec.~\ref{sec:experimental_setting}, 
 and Appendix~\ref{app_sec:train_detail}.
    \item[] Guidelines:
    \begin{itemize}
        \item The answer NA means that the paper does not include experiments.
        \item If the paper includes experiments, a No answer to this question will not be perceived well by the reviewers: Making the paper reproducible is important, regardless of whether the code and data are provided or not.
        \item If the contribution is a dataset and/or model, the authors should describe the steps taken to make their results reproducible or verifiable. 
        \item Depending on the contribution, reproducibility can be accomplished in various ways. For example, if the contribution is a novel architecture, describing the architecture fully might suffice, or if the contribution is a specific model and empirical evaluation, it may be necessary to either make it possible for others to replicate the model with the same dataset, or provide access to the model. In general. releasing code and data is often one good way to accomplish this, but reproducibility can also be provided via detailed instructions for how to replicate the results, access to a hosted model (e.g., in the case of a large language model), releasing of a model checkpoint, or other means that are appropriate to the research performed.
        \item While NeurIPS does not require releasing code, the conference does require all submissions to provide some reasonable avenue for reproducibility, which may depend on the nature of the contribution. For example
        \begin{enumerate}
            \item If the contribution is primarily a new algorithm, the paper should make it clear how to reproduce that algorithm.
            \item If the contribution is primarily a new model architecture, the paper should describe the architecture clearly and fully.
            \item If the contribution is a new model (e.g., a large language model), then there should either be a way to access this model for reproducing the results or a way to reproduce the model (e.g., with an open-source dataset or instructions for how to construct the dataset).
            \item We recognize that reproducibility may be tricky in some cases, in which case authors are welcome to describe the particular way they provide for reproducibility. In the case of closed-source models, it may be that access to the model is limited in some way (e.g., to registered users), but it should be possible for other researchers to have some path to reproducing or verifying the results.
        \end{enumerate}
    \end{itemize}

\item {\bf Open access to data and code}
    \item[] Question: Does the paper provide open access to the data and code, with sufficient instructions to faithfully reproduce the main experimental results, as described in supplemental material?
    \item[] Answer: \answerNo{} 
    \item[] Justification: In accordance with the requirements of the supporting organization, the code will be released after the review process is completed. We are committed to providing sufficient instructions to ensure reproducibility at that time.
    \item[] Guidelines:
    \begin{itemize}
        \item The answer NA means that paper does not include experiments requiring code.
        \item Please see the NeurIPS code and data submission guidelines (\url{https://nips.cc/public/guides/CodeSubmissionPolicy}) for more details.
        \item While we encourage the release of code and data, we understand that this might not be possible, so “No” is an acceptable answer. Papers cannot be rejected simply for not including code, unless this is central to the contribution (e.g., for a new open-source benchmark).
        \item The instructions should contain the exact command and environment needed to run to reproduce the results. See the NeurIPS code and data submission guidelines (\url{https://nips.cc/public/guides/CodeSubmissionPolicy}) for more details.
        \item The authors should provide instructions on data access and preparation, including how to access the raw data, preprocessed data, intermediate data, and generated data, etc.
        \item The authors should provide scripts to reproduce all experimental results for the new proposed method and baselines. If only a subset of experiments are reproducible, they should state which ones are omitted from the script and why.
        \item At submission time, to preserve anonymity, the authors should release anonymized versions (if applicable).
        \item Providing as much information as possible in supplemental material (appended to the paper) is recommended, but including URLs to data and code is permitted.
    \end{itemize}

\item {\bf Experimental setting/details}
    \item[] Question: Does the paper specify all the training and test details (e.g., data splits, hyperparameters, how they were chosen, type of optimizer, etc.) necessary to understand the results?
    \item[] Answer: \answerYes{} 
    \item[] Justification: All the training and evaluation details have been provided in Sec.~\ref{sec:experimental_setting} and Appendix~\ref{app_sec:train_detail}.   
    \item[] Guidelines:
    \begin{itemize}
        \item The answer NA means that the paper does not include experiments.
        \item The experimental setting should be presented in the core of the paper to a level of detail that is necessary to appreciate the results and make sense of them.
        \item The full details can be provided either with the code, in appendix, or as supplemental material.
    \end{itemize}

\item {\bf Experiment statistical significance}
    \item[] Question: Does the paper report error bars suitably and correctly defined or other appropriate information about the statistical significance of the experiments?
    \item[] Answer: \answerYes{} 
    \item[] Justification: Some experiments, including latency statistics, were conducted multiple times, and the corresponding error bars or bands are reported in Sec.~\ref{sec:analyses}.
    \item[] Guidelines:
    \begin{itemize}
        \item The answer NA means that the paper does not include experiments.
        \item The authors should answer "Yes" if the results are accompanied by error bars, confidence intervals, or statistical significance tests, at least for the experiments that support the main claims of the paper.
        \item The factors of variability that the error bars are capturing should be clearly stated (for example, train/test split, initialization, random drawing of some parameter, or overall run with given experimental conditions).
        \item The method for calculating the error bars should be explained (closed form formula, call to a library function, bootstrap, etc.)
        \item The assumptions made should be given (e.g., Normally distributed errors).
        \item It should be clear whether the error bar is the standard deviation or the standard error of the mean.
        \item It is OK to report 1-sigma error bars, but one should state it. The authors should preferably report a 2-sigma error bar than state that they have a 96\% CI, if the hypothesis of Normality of errors is not verified.
        \item For asymmetric distributions, the authors should be careful not to show in tables or figures symmetric error bars that would yield results that are out of range (e.g. negative error rates).
        \item If error bars are reported in tables or plots, The authors should explain in the text how they were calculated and reference the corresponding figures or tables in the text.
    \end{itemize}

\item {\bf Experiments compute resources}
    \item[] Question: For each experiment, does the paper provide sufficient information on the computer resources (type of compute workers, memory, time of execution) needed to reproduce the experiments?
    \item[] Answer: \answerYes{} 
    \item[] Justification: We include descriptions of the computational resources used in Sec.~\ref{sec: efficiency} and Appedix~\ref{app_sec:train_detail}.
    \item[] Guidelines:
    \begin{itemize}
        \item The answer NA means that the paper does not include experiments.
        \item The paper should indicate the type of compute workers CPU or GPU, internal cluster, or cloud provider, including relevant memory and storage.
        \item The paper should provide the amount of compute required for each of the individual experimental runs as well as estimate the total compute. 
        \item The paper should disclose whether the full research project required more compute than the experiments reported in the paper (e.g., preliminary or failed experiments that didn't make it into the paper). 
    \end{itemize}
    
\item {\bf Code of ethics}
    \item[] Question: Does the research conducted in the paper conform, in every respect, with the NeurIPS Code of Ethics \url{https://neurips.cc/public/EthicsGuidelines}?
    \item[] Answer: \answerYes{} 
    \item[] Justification: We followed the NeurIPS Code of Ethics.
    \item[] Guidelines:
    \begin{itemize}
        \item The answer NA means that the authors have not reviewed the NeurIPS Code of Ethics.
        \item If the authors answer No, they should explain the special circumstances that require a deviation from the Code of Ethics.
        \item The authors should make sure to preserve anonymity (e.g., if there is a special consideration due to laws or regulations in their jurisdiction).
    \end{itemize}

\item {\bf Broader impacts}
    \item[] Question: Does the paper discuss both potential positive societal impacts and negative societal impacts of the work performed?
    \item[] Answer: \answerYes{} 
    \item[] Justification: We describe broader impacts of our work in Appendix~\ref{app_sec:broader_impact}.
    \item[] Guidelines:
    \begin{itemize}
        \item The answer NA means that there is no societal impact of the work performed.
        \item If the authors answer NA or No, they should explain why their work has no societal impact or why the paper does not address societal impact.
        \item Examples of negative societal impacts include potential malicious or unintended uses (e.g., disinformation, generating fake profiles, surveillance), fairness considerations (e.g., deployment of technologies that could make decisions that unfairly impact specific groups), privacy considerations, and security considerations.
        \item The conference expects that many papers will be foundational research and not tied to particular applications, let alone deployments. However, if there is a direct path to any negative applications, the authors should point it out. For example, it is legitimate to point out that an improvement in the quality of generative models could be used to generate deepfakes for disinformation. On the other hand, it is not needed to point out that a generic algorithm for optimizing neural networks could enable people to train models that generate Deepfakes faster.
        \item The authors should consider possible harms that could arise when the technology is being used as intended and functioning correctly, harms that could arise when the technology is being used as intended but gives incorrect results, and harms following from (intentional or unintentional) misuse of the technology.
        \item If there are negative societal impacts, the authors could also discuss possible mitigation strategies (e.g., gated release of models, providing defenses in addition to attacks, mechanisms for monitoring misuse, mechanisms to monitor how a system learns from feedback over time, improving the efficiency and accessibility of ML).
    \end{itemize}
    
\item {\bf Safeguards}
    \item[] Question: Does the paper describe safeguards that have been put in place for responsible release of data or models that have a high risk for misuse (e.g., pretrained language models, image generators, or scraped datasets)?
    \item[] Answer: \answerNA{} 
    \item[] Justification: The paper poses no such risks.
    \item[] Guidelines:
    \begin{itemize}
        \item The answer NA means that the paper poses no such risks.
        \item Released models that have a high risk for misuse or dual-use should be released with necessary safeguards to allow for controlled use of the model, for example by requiring that users adhere to usage guidelines or restrictions to access the model or implementing safety filters. 
        \item Datasets that have been scraped from the Internet could pose safety risks. The authors should describe how they avoided releasing unsafe images.
        \item We recognize that providing effective safeguards is challenging, and many papers do not require this, but we encourage authors to take this into account and make a best faith effort.
    \end{itemize}

\item {\bf Licenses for existing assets}
    \item[] Question: Are the creators or original owners of assets (e.g., code, data, models), used in the paper, properly credited and are the license and terms of use explicitly mentioned and properly respected?
    \item[] Answer: \answerYes{} 
    \item[] Justification: The creators or original owners of assets (e.g., code, data, models) used in the paper have been properly credited, and the corresponding papers have been cited.
    \item[] Guidelines:
    \begin{itemize}
        \item The answer NA means that the paper does not use existing assets.
        \item The authors should cite the original paper that produced the code package or dataset.
        \item The authors should state which version of the asset is used and, if possible, include a URL.
        \item The name of the license (e.g., CC-BY 4.0) should be included for each asset.
        \item For scraped data from a particular source (e.g., website), the copyright and terms of service of that source should be provided.
        \item If assets are released, the license, copyright information, and terms of use in the package should be provided. For popular datasets, \url{paperswithcode.com/datasets} has curated licenses for some datasets. Their licensing guide can help determine the license of a dataset.
        \item For existing datasets that are re-packaged, both the original license and the license of the derived asset (if it has changed) should be provided.
        \item If this information is not available online, the authors are encouraged to reach out to the asset's creators.
    \end{itemize}

\item {\bf New assets}
    \item[] Question: Are new assets introduced in the paper well documented and is the documentation provided alongside the assets?
    \item[] Answer: \answerNA{} 
    \item[] Justification: The paper does not release new assets.
    \item[] Guidelines:
    \begin{itemize}
        \item The answer NA means that the paper does not release new assets.
        \item Researchers should communicate the details of the dataset/code/model as part of their submissions via structured templates. This includes details about training, license, limitations, etc. 
        \item The paper should discuss whether and how consent was obtained from people whose asset is used.
        \item At submission time, remember to anonymize your assets (if applicable). You can either create an anonymized URL or include an anonymized zip file.
    \end{itemize}

\item {\bf Crowdsourcing and research with human subjects}
    \item[] Question: For crowdsourcing experiments and research with human subjects, does the paper include the full text of instructions given to participants and screenshots, if applicable, as well as details about compensation (if any)? 
    \item[] Answer: \answerNA{} 
    \item[] Justification: The paper does not involve crowdsourcing nor research with human subjects.
    \item[] Guidelines:
    \begin{itemize}
        \item The answer NA means that the paper does not involve crowdsourcing nor research with human subjects.
        \item Including this information in the supplemental material is fine, but if the main contribution of the paper involves human subjects, then as much detail as possible should be included in the main paper. 
        \item According to the NeurIPS Code of Ethics, workers involved in data collection, curation, or other labor should be paid at least the minimum wage in the country of the data collector. 
    \end{itemize}

\item {\bf Institutional review board (IRB) approvals or equivalent for research with human subjects}
    \item[] Question: Does the paper describe potential risks incurred by study participants, whether such risks were disclosed to the subjects, and whether Institutional Review Board (IRB) approvals (or an equivalent approval/review based on the requirements of your country or institution) were obtained?
    \item[] Answer: \answerNA{} 
    \item[] Justification: The paper does not involve crowdsourcing nor research with human subjects.
    \item[] Guidelines:
    \begin{itemize}
        \item The answer NA means that the paper does not involve crowdsourcing nor research with human subjects.
        \item Depending on the country in which research is conducted, IRB approval (or equivalent) may be required for any human subjects research. If you obtained IRB approval, you should clearly state this in the paper. 
        \item We recognize that the procedures for this may vary significantly between institutions and locations, and we expect authors to adhere to the NeurIPS Code of Ethics and the guidelines for their institution. 
        \item For initial submissions, do not include any information that would break anonymity (if applicable), such as the institution conducting the review.
    \end{itemize}

\item {\bf Declaration of LLM usage}
    \item[] Question: Does the paper describe the usage of LLMs if it is an important, original, or non-standard component of the core methods in this research? Note that if the LLM is used only for writing, editing, or formatting purposes and does not impact the core methodology, scientific rigorousness, or originality of the research, declaration is not required.
    \item[] Answer: \answerNA{} 
    \item[] Justification: The core method development in this research does not involve LLMs as any important, original, or non-standard components.
    \item[] Guidelines:
    \begin{itemize}
        \item The answer NA means that the core method development in this research does not involve LLMs as any important, original, or non-standard components.
        \item Please refer to our LLM policy (\url{https://neurips.cc/Conferences/2025/LLM}) for what should or should not be described.
    \end{itemize}

\end{enumerate}

\clearpage
\appendix
\startcontents[appendix]
\printcontents[appendix]{ }{0}{\section*{Appendix}}

\section{More Generalizability Experiments}
\label{app_sec:more_generalizability_experiments}
\begin{figure}[!ht]
    \centering
    \includegraphics[width=1.0\linewidth]{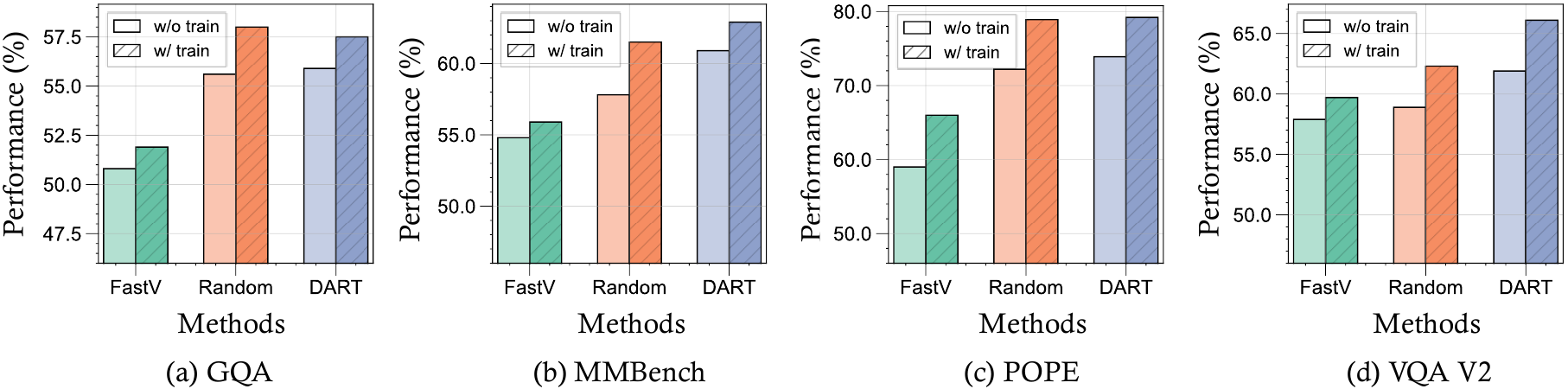}
    \caption{Following model architecture and training data of LLaVA-v1.5-7B, we apply the \textbf{Random token compression} for token consistency distillation during training. Here, ``w/o train'' denotes the vanilla LLaVA. During inference, all methods achieve a token compression ratio of 88.9\%.}
    \label{fig:generalized_exp_random}
\end{figure}

\begin{figure}[!ht]
    \centering
    \includegraphics[width=1.0\linewidth]{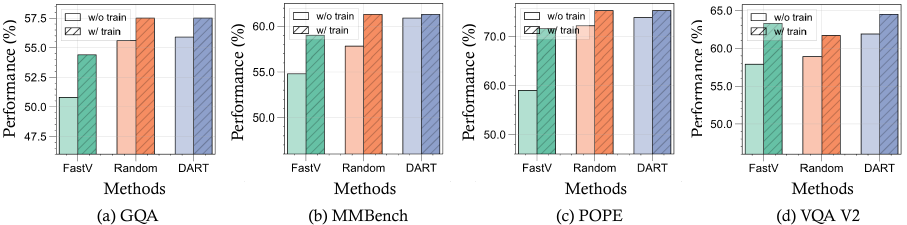}
    \caption{Following model architecture and training data of LLaVA-v1.5-7B, we apply the \textbf{FastV} for token consistency distillation during training. Here, ``w/o train'' denotes the vanilla LLaVA. During inference, all methods achieve a token compression ratio of 88.9\%.}
    \label{fig:generalized_exp_fastv}
\end{figure}
In addition to the cross-method generalization validation of our proposed framework conducted in Sec.~\ref{sec:generalize}, we performed more comprehensive and meticulous experiments to further verify its effectiveness.
Specifically, as illustrated in Figures~\ref{fig:generalized_exp_random} and \ref{fig:generalized_exp_fastv}, we employ Random Token Compression and FastV, respectively, to perform token consistency distillation on the multi-modal large language model.
Experimental results demonstrate that regardless of the token compression strategy used during training with our proposed method, the trained model consistently achieves improved inference performance across various token compression strategies.
This strongly demonstrates the generalizability and effectiveness of our approach, indicating that models trained under our framework genuinely adapt to the pattern of missing visual tokens without being constrained to a fixed set of preserved tokens.

\section{Detailed Theoretical Analysis}\label{app_sec:proof}

\begin{proof}[Proof of Theorem~\ref{thm:scalar}] 
\textbf{1. Direct path.}
Since \(\theta^{\mathrm{dir}}_{r_t} = c(r_t)\), the total variation is
$$
\TV(\{\theta^{\mathrm{dir}}_{r_t}\}) = \sum_{t=0}^{T-1} (c(r_{t+1}) - c(r_t)).
$$
By monotonicity \textbf{(S1)} and Lipschitz continuity \textbf{(S2)}, each term satisfies
$$
c(r_{t+1}) - c(r_t) \le \gamma (r_{t+1} - r_t),
$$
so summing yields:
$$
\TV(\{\theta^{\mathrm{dir}}_{r_t}\}) \le \gamma \sum_{t=0}^{T-1} (r_{t+1} - r_t) = \gamma r_{\max}.
$$

\textbf{2. Progressive path.}
From the closed-form minimizer,
$$
\theta^{\mathrm{prog}}_{r_t} = \frac{c(r_t) + \lambda\,c(r_t - \Delta)}{1 + \lambda},
$$
we have:
$$
\theta^{\mathrm{prog}}_{r_{t+1}} - \theta^{\mathrm{prog}}_{r_t}
= \frac{(c(r_{t+1}) - c(r_t)) + \lambda (c(r_{t+1} - \Delta) - c(r_t - \Delta))}{1 + \lambda}.
$$
Define \(\Delta_t := c(r_{t+1}) - c(r_t) \ge 0\) and \(\tilde{\Delta}_t := c(r_{t+1} - \Delta) - c(r_t - \Delta) \ge 0\). Then:
$$
\left| \theta^{\mathrm{prog}}_{r_{t+1}} - \theta^{\mathrm{prog}}_{r_t} \right|
\le \frac{\Delta_t + \lambda \tilde{\Delta}_t}{1 + \lambda}
= \frac{1 + \lambda \kappa_t}{1 + \lambda} \cdot \Delta_t,
\quad \text{where } \kappa_t := \frac{\tilde{\Delta}_t}{\Delta_t}.
$$

\textbf{3. Bounding \(\kappa_t\).}
Define the worst-case slope ratio:
$$
  \kappa := \sup_{r \in [\Delta,\, r_{\max} - \Delta]}
  \frac{c(r) - c(r - \Delta)}{c(r + \Delta) - c(r)}.
$$
Each \(\kappa_t\) compares the lagged slope to the current slope. If the schedule is appropriately spaced so that \(r_t \in [\Delta, r_{\max} - \Delta]\), then we have:
$$
\kappa_t = \frac{c(r_{t+1} - \Delta) - c(r_t - \Delta)}{c(r_{t+1}) - c(r_t)} \le \kappa.
$$
Convexity \textbf{(S3)} ensures that \(c'(r)\) is non-decreasing, so
$$
c(r) - c(r - \Delta) \le c(r + \Delta) - c(r)
\quad \Rightarrow \quad
\kappa \le 1.
$$
If \(c\) is not affine (i.e., \(c''(r) > 0\) on a set of positive measure), the inequality is strict for some \(r\), and thus \(\kappa < 1\).

\textbf{4. Total variation bound.}
Summing over steps gives:
$$
  \TV(\{\theta^{\mathrm{prog}}_{r_t}\})
  = \sum_{t=0}^{T-1} \left| \theta^{\mathrm{prog}}_{r_{t+1}} - \theta^{\mathrm{prog}}_{r_t} \right|
  \le \sum_{t=0}^{T-1} \frac{1 + \lambda \kappa}{1 + \lambda} \cdot \Delta_t
  = \frac{1 + \lambda \kappa}{1 + \lambda} \cdot \TV(\{\theta^{\mathrm{dir}}_{r_t}\}),
$$
with strict inequality when \(\kappa < 1\), completing the proof.
\end{proof}

\section{Integrated Progressive Consistency Distillation}
\label{app_sec:Integrated_progressive_distillation}
\definecolor{mygray}{gray}{.92}
\begin{table*}[!ht]
\caption{Performance on 10 visual understanding benchmarks. `Res.' is resolution, and `\#Vision Tokens' is the number of vision tokens fed to the LLM backbone. Both training and inference of \textbf{Integrated Progressive Consistency Distillation (ICD)} employ DART as the token compression strategy for our methods. Parentheses in Avg.(\%) column show diffs vs. LLaVA-v1.5.}
\label{tab:app_integrated_result}
\centering
\tiny
\begin{tabular}
{L{1.75cm}L{1.12cm}C{0.2cm}|C{0.55cm}|C{0.3cm}C{0.3cm}C{0.3cm}C{0.3cm}C{0.3cm}C{0.3cm}C{0.38cm}C{0.3cm}C{0.3cm}C{0.38cm}|C{0.84cm}} 
\toprule
\textbf{Methods} & \textbf{LLM} & \textbf{Res.} & \textbf{\begin{tabular}[c]{@{}c@{}}\#Vision\\ Tokens\end{tabular}} & $\!\!\textbf{\text{VQA}}^{\!\text{V2}}$ & $\!\!$\textbf{GQA} & $\!\!\!$\textbf{VizWiz} & $\!\!$$\textbf{\text{SQA}}^{\!\text{I}}$ & $\!\!$$\textbf{\text{VQA}}^{\!\text{T}}$ & $\!\!$\textbf{POPE} & \textbf{MME} & $\!\!$\textbf{MMB} & $\!$\textbf{\begin{tabular}[c]{@{}c@{}}MMB-\\ CN\end{tabular}} & $\!$\textbf{\begin{tabular}[c]{@{}c@{}}OCR\\ Bench\end{tabular}} & \textbf{\begin{tabular}[c]{@{}c@{}}Avg.\\ (\%)\end{tabular}} \\
\midrule
\textbf{BLIP-2}~\citep{li2023blip} & Vicuna-13B & 224 & 32 & 65.0 & 41.0 & 19.6 & 61.0 & 42.5 & 85.3 & -- & – & -- & -- & -- \\
\textbf{InstructBLIP}~\citep{instructblip} & Vicuna-7B & 224 & 32 & 66.3 & 49.2 & 34.5 & 60.5 & 50.1 & 83.9 & 1500 & 36.0 & -- & 259 & -- \\
\textbf{InstructBLIP}~\citep{instructblip} & Vicuna-13B & 224 & 32 & 64.2 & 49.5 & 33.4 & 63.1 & -- & 84.1 & 1530 & 36.9 & 17.4 & 252 & --  \\
\textbf{IDEFICS-9B}~\citep{laurenccon2023obelics} & LLaMA-7B & 224 & 64 & 50.9 & 38.4 & 35.5 & – & 25.9 & 75.3 & 1027 & 48.2 & -- & 245 & -- \\
\textbf{IDEFICS-80B}~\citep{laurenccon2023obelics} & LLaMA-65B & 224 & 64 & 60.0 & 45.2 & 36.0 & -- & 30.9 & -- & 1076 & 54.5 & 29.1 & 277 & -- \\
\textbf{Qwen-VL}~\citep{Bai:Qwen-VL} & Qwen-7B & 448 & 256 & -- & 59.3 & 35.2 & 67.1 & 63.8 & -- & 1708 & 38.2 & -- & 133 & -- \\
\textbf{Qwen-VL-Chat}~\citep{Bai:Qwen-VL} & Qwen-7B & 448 & 256 & -- & 57.5 & 38.9 & 68.2 & 61.5 & -- & 1891 & 60.6 & -- & 267 & -- \\
\textbf{SPHINX}~\citep{lin2023sphinx} & LLaMA-13B & 224 & 289 & 78.1 & 62.6 & 39.9 & 69.3 & 51.6 & 80.7 & -- & 66.9 & -- & -- & -- \\
\textbf{SPHINX-2k}~\citep{lin2023sphinx} & LLaMA-13B & 762 & 2890 & 80.7 & 63.1 & 44.9 & 70.6 & 61.2 & 87.2 & -- & 65.9 & -- & -- & -- \\
\textbf{mPLUG-Owl2}~\citep{ye2024mplug} & LLaMA-7B & 448 & 1024 & 79.4 & 56.1 & 54.5 & 68.7 & 54.3 & - & -- & 64.5 & -- & -- & -- \\
\textbf{Video-LLaVA}~\citep{lin2023video} & Vicuna-7B & 224 & 256 & 65.9 & 60.3 & 48.1 & 66.4 & 51.8 & 83.1 & 1542 & 60.6 & 49.3 & 161 & 55.7 \\
\textbf{LLaVA-v1.5}~\citep{liu2023llava} & Vicuna-7B & 336 & 576 & 72.2 & 61.9 & 52.5 & 68.3 & 58.1 & 85.9 & 1785 & 64.1 & 55.8 & 319 & 61.4 \\
\rowcolor{mygray}\multicolumn{15}{c}{\scriptsize\textit{\textbf{LMMs with fewer vision tokens}}} \\
\textbf{Average-Pooling} & Vicuna-7B & 336 & 64 & 63.0 & 55.5 & 48.4 & 68.6 & 52.6 & 79.2 & 1579 & 59.6 & 49.5 & 258 & 55.9 {\color[HTML]{C2183A} \fontsize{5pt}{6pt}\selectfont (-5.5)}    \\
\textbf{MQT-LLaVA}~\citep{hu2024matryoshka} & Vicuna-7B & 336 & 2 & 51.4 & 49.6 & 50.0 & 66.1 & 14.8 & 75.4 & 1402 & 48.9 & 40.5 & 169 & 46.4 {\color[HTML]{C2183A} \fontsize{5pt}{6pt}\selectfont (-15)} \\
\textbf{MQT-LLaVA}~\citep{hu2024matryoshka} & Vicuna-7B & 336 & 36 & 62.0 & 57.7 & 53.6 & 69.2 & 28.6 & 82.9 & 1777 & 60.5 & 51.6 & 244 & 55.4 {\color[HTML]{C2183A} \fontsize{5pt}{6pt}\selectfont (-6.0)} \\
\textbf{MQT-LLaVA}~\citep{hu2024matryoshka} & Vicuna-7B & 336 & 64 & 65.6 & 58.7 & 54.3 & 68.4 & 32.5 & 83.1 & 1810 & 61.3 & 53.7 & 260 & 56.8 {\color[HTML]{C2183A} \fontsize{5pt}{6pt}\selectfont (-4.6)}  \\
\textbf{MQT-LLaVA}~\citep{hu2024matryoshka} & Vicuna-7B & 336 & 128 & 66.2 & 59.8 & 54.6 & 69.3 & 35.7 & 84.3 & 1773 & 62.1 & 53.6 & 266 & 57.6 {\color[HTML]{C2183A} \fontsize{5pt}{6pt}\selectfont (-3.8)} \\
\textbf{MQT-LLaVA}~\citep{hu2024matryoshka} & Vicuna-7B & 336 & 192 & 66.9 & 59.9 & 54.6 & 69.1 & 35.8 & 85.1 & 1784 & 62.0 & 53.9 & 263 & 57.7 {\color[HTML]{C2183A} \fontsize{5pt}{6pt}\selectfont (-3.7)} \\
\textbf{MQT-LLaVA}~\citep{hu2024matryoshka} & Vicuna-7B & 336 & 256 & 68.3 & 60.1 & 54.6 & 69.0 & 37.1 & 84.6 & 1740 & 61.7 & 53.0 & 273 & 57.8 {\color[HTML]{C2183A} \fontsize{5pt}{6pt}\selectfont (-3.6)} \\
\textbf{QT-LLaVA} & Vicuna-7B & 336 & 256 & -- & 60.3 & 51.5 & 68.1 & 36.9 & 84.1 & 1771 & 62.1 & 53.9 & 265 & --  \\
\textbf{LLaMA-VID}~\citep{li2024llama} & Vicuna-7B & 336 & 2 & -- & 55.5 & 54.2 & 68.8 & 49.0 & 83.1 & -- & -- & -- & -- & --  \\
\textbf{VoCo-LLaMA}~\citep{ye2024voco} & Vicuna-7B & 336 & 1 & -- & 55.6 & 54.6 & 68.4 & 31.7 & 80.8 & 1594 & 56.4 & 46.2 & 69 & -- \\
\textbf{TokenPacker}~\citep{li2024tokenpacker} & Vicuna-7B & 336 & 144 & 71.3 & 62.0 & 56.6 & 70.5 & 43.8 & 86.2 & 1716 & 63.9 & 53.4 & 303 & 59.9 {\color[HTML]{C2183A} \fontsize{5pt}{6pt}\selectfont (-1.5)} \\
\textbf{TokenPacker}~\citep{li2024tokenpacker} & Vicuna-7B & 336 & 36 & -- & 58.6 & 50.2 & -- & -- & 83.7 & -- & 62.8 & -- & -- & --   \\
\textbf{LLaVA-Mini}~\citep{zhang2025llava} & Vicuna-7B & 336 & 144 & 58.1 & 56.3 & 14.8 & 25.3 & 26.0 & 82.3 & 1325 & 24.8 & -- & 132 & -- \\
\textbf{LLaVA-Mini}~\citep{zhang2025llava} & Vicuna-7B & 336 & 64 & - & 56.6 & 10.4 & 27.4 & 28.1 & 82.3 & 1324 & 23.8 & -- & 145 & --   \\
\rowcolor{mygray}\multicolumn{15}{c}{\scriptsize\textit{\textbf{Ours}}} \\
\textbf{LLaVA-v1.5 + ICD} & Vicuna-7B & 336 & 256 & 72.2 & 61.7 & 54.7 & 70.0 & 58.1 & 85.7 & 1788 & 64.9 & 55.7 & 312 & 61.8 {\color[HTML]{18A6C2} \fontsize{5pt}{6pt}\selectfont (+0.4)}  \\ 
\textbf{LLaVA-v1.5 + ICD} & Vicuna-7B & 336 & 192 & 71.9 & 61.4 & 54.8 & 70.3 & 57.6 & 84.8 & 1783 & 64.8 & 55.9 & 316 & 61.7 {\color[HTML]{18A6C2} \fontsize{5pt}{6pt}\selectfont (+0.3)}  \\ 
\textbf{LLaVA-v1.5 + ICD} & Vicuna-7B & 336 & 128 & 69.6 & 60.3 & 55.2 & 70.1 & 57.1 & 83.5 & 1770 & 63.9 & 55.2 & 298 & 60.8 {\color[HTML]{C2183A} \fontsize{5pt}{6pt}\selectfont (-0.6)} \\ 
\textbf{LLaVA-v1.5 + ICD} & Vicuna-7B & 336 & 64 & 66.5 & 57.5 & 55.3 & 70.4 & 54.7 & 79.3 & 1720 & 63.8 & 53.2 & 275 & 59.0 {\color[HTML]{C2183A} \fontsize{5pt}{6pt}\selectfont (-2.4)} \\ 
\textbf{LLaVA-v1.5 + ICD} & Vicuna-7B & 336 & 36 & 62.1 & 55.6 & 54.4 & 70.9 & 52.8 & 74.0 & 1635 & 61.2 & 51.1 & 246 & 56.5 {\color[HTML]{C2183A} \fontsize{5pt}{6pt}\selectfont (-4.9)} \\ 
\bottomrule 
\end{tabular}
\end{table*}

As outlined in Sec.~\ref{sec:intro}, our Progressive Consistency Distillation framework incorporates two progressive learning mechanisms: Token Consistency Distillation (TCD) in a token-wise manner and Layer Consistency Distillation (LCD) in a layer-wise manner.
Experimental results demonstrate that both approaches achieve strong performance across all key metrics: model accuracy, efficiency, robustness, and generalization capability.
To further investigate this direction, we explore the integration of progressive learning from both token-wise and layer-wise perspectives. A critical question arises: \emph{Can this integrated approach continue to maintain or even enhance the model's performance?}
Building upon these two approaches, we design Integrated Progressive Consistency Distillation (ICD). Unlike the original Token Consistency Distillation (TCD), where token-wise progression is governed by global training progress, our method enables layer-wise iterative application of TCD, effectively integrating both token-wise and layer-wise dimensions. Specifically, during training, the token compression layer progressively shifts from deeper to shallower layers. Within each layer, the compression ratio follows the TCD schedule, sampling from small to large values. Upon transitioning to the next layer, the ratio resets to its initial value, and the process repeats.
As shown in Table~\ref{tab:app_integrated_result}, the Integrated Progressive Consistency Distillation approach achieves comparable performance on representative visual benchmarks. Notably, it even surpasses the vanilla LLaVA in average performance while retaining only 192 visual tokens ($66.7\% \downarrow$). 
Furthermore, when compared to other training-aware token compression approaches (such as MQT-LLaVA and TokenPacker), ICD demonstrates superior performance under identical retained visual tokens.

\section{Experimental Setup}
\label{app_sec:train_detail}
\subsection{Token Compression Techniques During Training}
We adopt three representative token compression methods in our proposed framework: FastV, which follows an importance-based strategy; DART, which leverages redundancy-based pruning; and random token pruning, the simplest form of token compression.
\begin{itemize}[leftmargin=*]
\item DART~\citep{wen2025stop} is a training-free and plug-and-play method that prunes visual tokens based on token duplication while maintaining compatibility with efficient attention mechanisms like Flash Attention~\citep{dao2022flashattention,dao2023flashattention}.

\item FastV~\citep{chen2024image} is a plug-and-play token compression technique that builds on the observation that visual tokens tend to have diminishing contributions to model outputs in deeper layers. By utilizing attention scores to assess token importance, it prunes less critical visual tokens at earlier layers of the language model.

\item Random is a token compression method that requires no additional signals or computation, and is primarily used to validate the effectiveness and generalizability of our proposed approach.
\end{itemize}

\begin{table}[!ht]
\centering
\caption{Method Comparison Across Different Stages}
\vspace{2mm}
\resizebox{0.65\linewidth}{!}{
\begin{tabular}{l|cccc}
\toprule
\textbf{Method} & \textbf{Stage 1} & \textbf{Stage 2} & \textbf{Stage 3} & \textbf{Training Time $\downarrow$ (h)} \\
\midrule
QT-LLaVA & $\checkmark$ & $\checkmark$ & & $\sim$ 30h $\times$ 8 A100s  \\
MQT-LLaVA & $\checkmark$ & $\checkmark$ & & $\sim$ 34h $\times$ 8 A6000s \\
LLaMA-VID & $\checkmark$ & $\checkmark$ & $\checkmark$ & $\sim$ 48h $\times$ 8 A100s \\
LLaVA-Mini & $\checkmark$ & $\checkmark$ & & $\sim$ 26h $\times$ 8 A100s \\
\texttt{\algname} (Ours) & & $\checkmark$ & & $\sim$ 12.2h $\times$ 8 A100s \\
\bottomrule
\end{tabular}}
\label{tab:training_efficiency}
\end{table}

\subsection{Training Details}
\label{app_sec:training_details}
For a fair comparison, our framework not only adheres to the same model architecture and instruction-tuning data as vanilla LLaVA but also maintains identical hyperparameter settings. 
Moreover, since we have not modified the model architecture, we can directly use the pre-trained projector—unlike MQT-LLaVA, QT-LLaVA, and LLaVA-Mini, which require mandatory Stage 1 pre-training.
The training process, conducted on 8 $\times$ A100 GPUs, takes approximately 12 hours.
Furthermore, we faithfully reproduced the entire LLaVA training process following the official LLaVA-v1.5 training guidelines. All other token compression baselines are either trained following the settings provided in the original papers or evaluated using publicly available model checkpoints. 
For the ablation study in Section~\ref{sec:ablation}, the experiment without the distillation loss was trained using the same LLaVA-665K SFT data, with all other training parameters and procedures kept identical to those of \texttt{\algname}. For the variants without a progressive compression ratio (Table~\ref{tab:abalation_study1}) or progressive compression layer (Table~\ref{tab:abalation_study2}), we fixed the second layer as the compression layer. Overall, our ablation studies were conducted under a strictly fair and consistent experimental setup.

\begin{table}[!ht]
\centering
\caption{Detailed hyperparameter settings.}
\vspace{2mm}
\resizebox{0.35\textwidth}{!}{
\begin{tabular}{l| c}
\toprule
\textbf{Settings} & \textbf{Stage 2} \\
\midrule
Batch size & 128 \\
Learning rate & 2e-5 \\
Learning schedule & Cosine decay \\
Warmup ratio & 0.03 \\
Weight decay & 0 \\
Epoch & 1 \\
Optimizer & AdamW \\
DeepSpeed stage & 3 \\
Max token & 2048 \\
\bottomrule
\end{tabular}}
\label{tab:train_detail}
\end{table}


\subsection{Benchmarks}
\label{app_sec:benchmarks}

\begin{itemize}[leftmargin=*]
\item MME~\citep{fu2023mme} is a comprehensive benchmark for evaluating the performance of MLLMs in multi-modal tasks. It measures models' capabilities across two key areas: perception and cognition, using 14 specially designed subtasks that test interpretative and analytical skills.

\item MMBench~\citep{liu2023mmbench} employs a dual approach: it provides an extensive dataset that broadens the range and variety of evaluation questions, and introduces the innovative CircularEval strategy, which uses ChatGPT to convert free-form predictions into structured choices. MMBench-CN is the Chinese version of the benchmark.

\item ScienceQA~\citep{lu2022learn} is a multi-modal benchmark aimed at assessing and diagnosing AI systems' multi-hop reasoning and interpretability in the science domain. It includes a dataset of around 21K multiple-choice questions across various scientific topics, complete with detailed answer annotations, related lectures, and explanations.

\item GQA~\citep{hudson2019gqa} is a dataset designed for advanced visual reasoning in real-world scenarios, using scene graph-based structures to generate 22 million diverse, semantically-programmed questions. It features a novel set of evaluation metrics focused on consistency, grounding, and plausibility, setting a high standard for vision-language task assessment.

\item POPE~\citep{li-etal-2023-evaluating} is an evaluation method for examining object hallucination in MLLMs. It transforms the evaluation into a binary classification task, asking MLLMs simple Yes-or-No questions to identify hallucinated objects. POPE employs various object sampling strategies to reveal model tendencies towards hallucination.

\item VQA V2~\citep{balanced_vqa_v2} evaluates the model’s visual perception capabilities through open-ended questions. It consists of 265,016 images, covering a wide variety of real-world scenes and objects, providing rich visual contexts for the questions. For each question, there are 10 ground truth answers provided by human annotators, which allows for a comprehensive evaluation of the performance of different models in answering the questions accurately.

\item TextVQA~\citep{singh2019towards} focuses on the comprehensive integration of diverse text information within images. It meticulously evaluates the model’s text understanding and reasoning abilities through a series of visual question-answering tasks with rich textual information. Models need to not only understand the visual content of the images but also be able to read and reason about the text within the images to answer the questions accurately.

\item OCRBench~\citep{liu2024ocrbench} is a comprehensive benchmark for evaluating the OCR capabilities of multi-modal language models across five key tasks: text recognition, scene text-centric and document-oriented VQA, key information extraction, and handwritten mathematical expression recognition.

\end{itemize}

\subsection{Overview of the Baselines}
\label{app_sec:baselines}
\subsubsection{General MLLMs}
\label{app_sec:general_mllms}
\begin{itemize}[leftmargin=*]

\item BLIP-2~\citep{li2023blip} is a vision-language pretraining framework that efficiently combines frozen image encoders with large language models (LLMs). It adopts a two-stage training strategy leveraging a lightweight Querying Transformer to bridge the vision-language modality gap, enabling compute-efficient, zero-shot image-to-text generation aligned with natural language instructions.

\item InstructBLIP~\citep{instruct-blip} builds on BLIP-2 by introducing instruction tuning and an instruction-aware Query Transformer. This enhances the model’s ability to extract features for a wide range of vision-language tasks. It achieved state-of-the-art zero-shot performance across 13 benchmarks and demonstrated strong results on fine-tuned tasks such as ScienceQA.

\item IDEFICS~\citep{laurenccon2023obelics} is an open-access vision-language model based on the Flamingo~\citep{alayrac2022flamingo} architecture. Available in both base and instruction-tuned variants (9B and 80B parameters), IDEFICS is trained entirely on publicly available data and models, promoting transparency and accessibility.

\item Qwen-VL \& Qwen-VL-Chat~\citep{Bai:Qwen-VL} extend the Qwen-LM~\cite{bai2023qwen} foundation with a visual encoder and a specialized input-output interface. Through a three-stage training pipeline and a rich multilingual, multimodal corpus, the models achieve strong capabilities in grounding and optical character recognition (OCR) tasks.

\item SPHINX~\citep{lin2023sphinx} is a multimodal large language model that applies joint mixing across model weights, tuning objectives, visual embeddings, and image scales. By unfreezing the LLM during pretraining and integrating diverse instructional and visual signals, SPHINX demonstrates strong performance in fine-grained vision-language understanding and reasoning tasks.

\item mPLUG-Owl2~\citep{ye2024mplug} features a modular architecture with a language decoder interface for unified modality coordination. It employs shared cross-modal modules alongside modality-adaptive components to enhance feature retention and generalization in both unimodal and multimodal settings.

\item Video-LLaVA~\citep{lin2023video} extends multimodal language modeling to unified video and image understanding. By aligning visual modalities into a shared language feature space prior to projection, the model enables effective joint training across visual domains, achieving state-of-the-art results on diverse video and image benchmarks without requiring paired video-image data.

\item LLaVA-v1.5~\citep{liu2024improved} improves upon the original LLaVA~\citep{liu2023llava} model through enhanced visual instruction tuning. Utilizing a CLIP-ViT-L-336px visual backbone and MLP-based projection, it achieves strong performance with high data efficiency. Trained on only 1.2 million publicly available images, it excels at academic-oriented VQA tasks using straightforward prompting strategies.

\end{itemize}

\subsubsection{Training-aware Token Compression Methods}
\begin{itemize}[leftmargin=*]

\item Average Pooling, inspired by the token merging strategy in Qwen2-VL~\citep{wang2024qwen2}, merges every four adjacent visual patches into a single token, effectively reducing the number of visual tokens.

\item QT-LLaVA replaces the original modality projector in LLaVA with a Q-former~\citep{li2023blip} that uses learnable queries to project the input token sequence into a shorter sequence. In this work, we fix to retain $256$ visual tokens.

\item MQT-LLaVA~\citep{hu2024matryoshka} proposes a flexible Query Transformer that enables encoding images into a variable number of visual tokens (up to a predefined maximum), allowing dynamic adaptation to different tasks and computational budgets.

\item LLaMA-VID~\citep{li2024llama} compresses both the instruction and the image into a single token each, resulting in just two tokens per image. This design enables efficient understanding of longer video sequences.

\item VoCo-LLaMA~\citep{ye2024voco} utilizes language models to compress all vision tokens, significantly improving computational efficiency while maintaining multimodal understanding.

\item TokenPacker~\citep{li2024tokenpacker} introduces a visual projector that adopts a coarse-to-fine strategy to reduce the number of visual tokens by up to $80\%$, substantially decreasing the computational cost.

\item LLaVA-Mini~\citep{zhang2025llava} employs a modality pre-fusion module, constructed with Transformer blocks, to integrate visual information into the text tokens in advance. This approach reduces the number of visual tokens fed into the language model, thereby lowering computational cost.

\end{itemize}

\section{Limitations and Future Works}
\label{app_sec:future_works}
As discussed in the Appendix~\ref{app_sec:training_details}, our proposed progressive consistency distillation framework was only applied during the visual instruction tuning phase when training the multi-modal large language models (MLLMs). While this approach has already demonstrated remarkable effectiveness in terms of both model performance and training efficiency, several promising research directions remain unexplored.
For instance, how might our method perform if applied during the model's pretraining stage? Specifically, implementing our framework during projector pretraining, rather than initializing it with publicly available projector weights, could potentially yield greater performance improvements. We hypothesize that this would allow the modality projector's parameter space to adapt to token compression-induced feature space perturbations prior to fine-tuning, thereby further facilitating subsequent supervised fine-tuning.
In future work, we will systematically investigate extending our framework to all training stages (including pretraining and supervised finetuning), with rigorous analysis of both the performance gains and impacts on training-time efficiency.

\section{Broader Impacts}
\label{app_sec:broader_impact}
In this work, we propose a framework to develop efficient multi-modal large language models (MLLMs) through Progressive Consistency Distillation, which can be seamlessly integrated with various token compression strategies without modifying the original model architecture. Our approach demonstrates strong effectiveness, robustness, and generalizability.
On one hand, this contributes to improving the efficiency of MLLMs, thereby facilitating their deployment and practical applications at a societal level, especially for resource-constrained edge devices. 
On the other hand, after training under our framework with token-compressed inputs, the resulting model parameters shift from the optimum of the vanilla model (trained on full inputs) to a new optimum suited for compressed inputs. However, we do not further align these models with human preferences after token compression-based training. While the models perform well across a range of multi-modal tasks, they may carry potential risks of adversarial vulnerabilities or undesirable outputs.

\end{document}